\def\1{\bm{1}}
\def\eps{{\epsilon}}
\DeclareMathAlphabet{\mathsfit}{\encodingdefault}{\sfdefault}{m}{sl}
\SetMathAlphabet{\mathsfit}{bold}{\encodingdefault}{\sfdefault}{bx}{n}
\DeclareMathOperator*{\argmax}{arg\,max}
\renewcommand{\mid}{\,|\,}
\newtheorem{definition}{Definition}
\newtheorem{theorem}{Theorem}
\newcommand{\alg}{ELF-P\xspace}
\newcommand{\code}[1]{\texttt{#1}}
\newcommand{\OurAppendix}{\href{https://nuria95.github.io/elf-p/}{Appendix} }
\definecolor{elfp}{RGB}{225,156,36}
\definecolor{prefill}{RGB}{143,176,50}
\definecolor{ddqn}{RGB}{93,158,199}
\definecolor{ddqnfd}{RGB}{235,98,53}
\definecolor{her}{RGB}{59,58,126}
\definecolor{spirl}{RGB}{76,93,26}
\definecolor{soft-elfp}{RGB}{190, 37, 69}
\title{\LARGE \bf
Efficient Learning of High Level Plans from Play
}
\author{Núria Armengol Urpí$^{1}$, Marco Bagatella$^{1}$, Otmar Hilliges$^{1}$, Georg Martius$^{2}$ and Stelian Coros$^{1}$
\thanks{$^{1}$Department of Computer Science, ETH Zurich, Switzerland {\tt\small \{nuria.armengolurpi,  mbagatella, scoros, otmar.hilliges\}@inf.ethz.ch}}%
\thanks{$^{2}$Max Planck Institute for Intelligent Systems, Tübingen, Germany\newline {\tt\small georg.martius@tuebingen.mpg.de}}
}
\begin{document}

\maketitle
\thispagestyle{empty}
\pagestyle{empty}

%%%%%%%%%%%%%%%%%%%%%%%%%%%%%%%%%%%%%%%%%%%%%%%%%%%%%%%%%%%%%%%%%%%%%%%%%%%%%%%%
\begin{abstract}
Real-world robotic manipulation tasks remain an elusive challenge, since they involve both fine-grained environment interaction, as well as the ability to plan for long-horizon goals. Although deep reinforcement learning (RL) methods have shown encouraging results when planning end-to-end in high-dimensional environments, they remain fundamentally limited by poor sample efficiency due to inefficient exploration, and by the complexity of credit assignment over long horizons.
In this work, we present Efficient Learning of High-Level Plans from Play (\alg), a framework for robotic learning that bridges motion planning and deep RL to achieve long-horizon complex manipulation tasks.
We leverage task-agnostic \textit{play} data to learn a discrete behavioral prior over object-centric primitives, modeling their feasibility given the current context.
We then design a high-level goal-conditioned policy which (1) uses primitives as building blocks to scaffold complex long-horizon tasks and (2) leverages the behavioral prior to accelerate learning.
We demonstrate that \alg has significantly better sample efficiency than relevant baselines over multiple realistic manipulation tasks and learns policies that can be easily transferred to physical hardware.
\end{abstract}

%%%%%%%%%%%%%%%%%%%%%%%%%%%%%%%%%%%%%%%%%%%%%%%%%%%%%%%%%%%%%%%%%%%%%%%%%%%%%%%%
\section{Introduction}
\label{sec:Intro}

One of the collective visions of robotics is a world where robots help humans with daily household chores, such as setting up a table or emptying the dishwasher.
An immediate challenge is that, to operate in the physical world, a robot must be able to reason in a mixed decision space.
That is, it must combine decisions relating continuous motions with discrete subtasks in order to accomplish complex tasks.
For example, the task of emptying the dishwasher requires the robot to first go to the dishwasher, then open it, grab a single dish, open the cupboard and place the dish inside, and repeat until the dishwasher is empty.
Failing to perform a single subtask or ordering them incorrectly would ultimately lead to a failure.
The ability to reason over long horizons, we argue, is thus a crucial milestone in achieving the above-mentioned vision.

Despite significant improvements in several fields of robotics, ranging from motion planning to robust control \cite{winkler2018optimization, winkler2018gait, dai2014whole, bjelonic2022offline},  long-horizon problems involving several tasks still presents a considerable challenge.
While planning algorithms can efficiently search well-defined state spaces and plan over relatively long horizons, they generally struggle with complex, stochastic dynamics, and high-dimensional systems, in particular featuring narrow passages (i.e. low-measure regions which have to be traversed to reach a goal).
Moreover, they often require accurate models of the environment or handcrafted distance measures \cite{sermanet2021broadly, eysenbach2019search, mainprice2020interior}.

Learning methods have thus emerged as a more scalable approach for handling the high-dimensionality of the real world.
While capable of obtaining hard-to-engineer behaviors \cite{kalashnikov2021mt, akkaya2019solving, andrychowicz2020learning, kalashnikov2018qt}, learning-based methods are similarly
limited to short horizons tasks in general, since longer plans involving multi-level reasoning aggravate the challenges of exploration and temporal credit assignment \cite{nachum2018data}.

\begin{figure}[t]
\begin{center}
\includegraphics[width=\columnwidth]{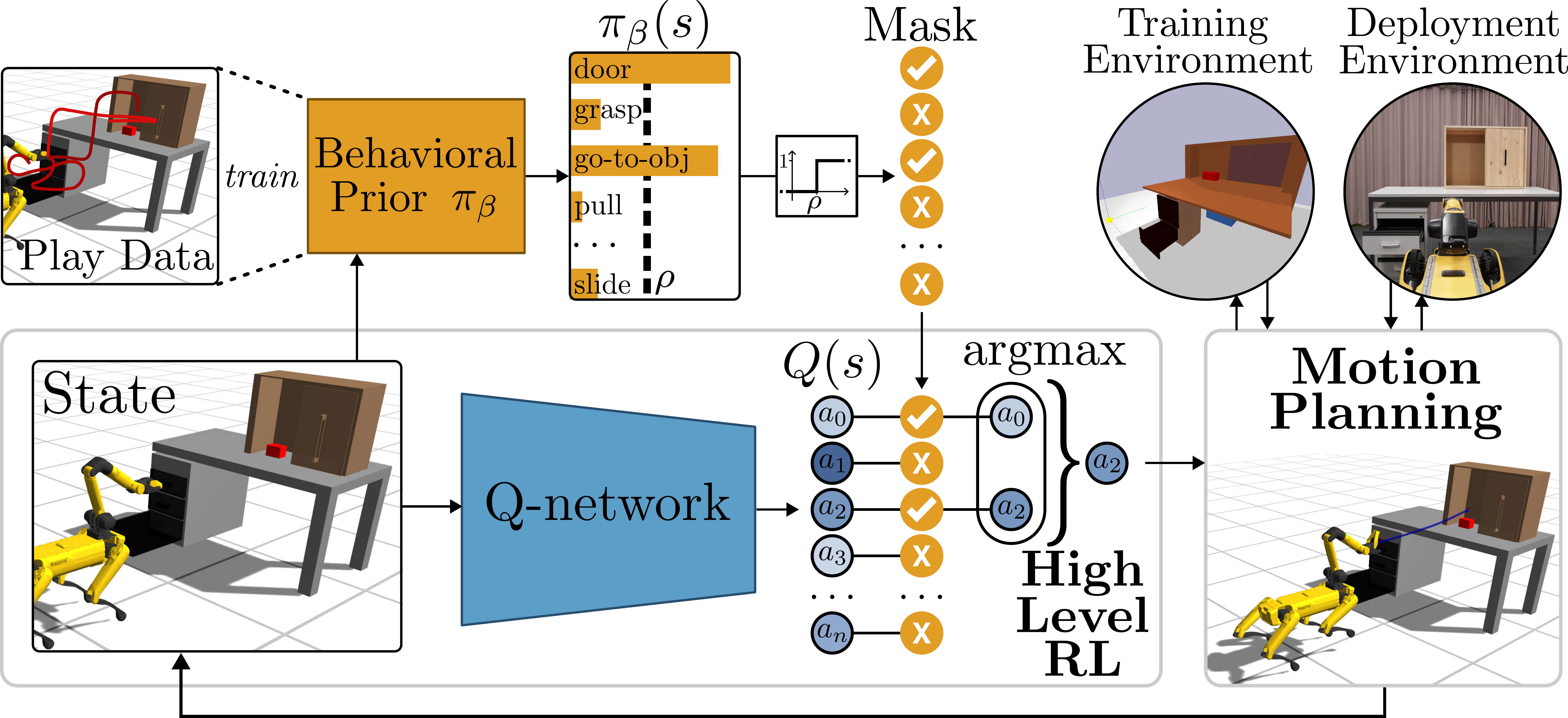}
\end{center}
\vspace{-4mm}
\caption{Overview of \alg. A discrete behavioral prior learned on play data is used to discard infeasible actions at every state, enabling a high-level RL agent to search over a smaller space of behaviors. Each action is mapped to a motion primitive, which is executed through classical motion planning methods. After training in simulation, the policies can be easily deployed on real hardware.}
\vspace{-5mm}
\label{fig:method}
\end{figure}
In this work, we introduce \alg, a novel approach for robotic learning of long-horizon manipulation tasks, bridging motion planning and deep reinforcement learning (RL).
In a two-level hierarchy, high-level planning is entrusted to a goal-conditioned policy, trained through deep RL and designed to deal with the high-dimensionality and complexity of real-world dynamics. This policy operates at a coarser time scale and controls the execution of primitives, which lie at the lower level in the hierarchy (see Figure \ref{fig:method}). Crucially, we sidestep the instability of jointly learning high and low-level policies and thus rely on a predefined library of object-centric primitives (such as grasping or reaching a target configuration in a collision-free path) obtained through classical motion planning.

While planning over primitives substantially reduces the burden of the higher level, uniform exploration may still be insufficient in long-horizon tasks with sparse rewards, as the space of possible behaviors grows rapidly with the size of the action space and number of steps to the goal \cite{levy2018hierarchical}.
Although humans excel at inferring the set of behaviors enabled by a situation, uninformed RL agents tend to repeatedly attempt futile actions, resulting in poor sample efficiency and potentially unsafe exploration.

We thus additionally propose to prune the high-level action space to a subspace of feasible primitives, inferred dynamically at each step. In practice, this is modelled through a discrete behavioral prior learned from task-agnostic play data.

While existing works on behavioral priors largely rely on soft integration schemes, we instead propose to discretize the prior into a hard, binary constraint. We then define an algorithm that only explores and exploits over the set of feasible actions, effectively performing Q-learning in a reduced Markov Decision Process (MDP).
Theoretically, a near-optimal solution to this MDP can be retrieved more efficiently and under mild assumptions, can be generalized to the original MDP.
Empirically, we observe improved performance in a variety of complex manipulation tasks and highlight the ease of transferring the policy to real-world hardware.

Our approach represents a promising step towards learning to solve long-horizon tasks in a hierarchical manner by combining task-agnostic play data, RL and motion planning algorithms. Our contributions can be summarized as follows: (a) we explore the integration of RL with classic motion planning algorithms and show its feasibility in a simulated manipulation environment; (b) we highlight how the exploration problem can be further addressed by introducing a prior model learned from play data; (c) we evaluate our proposed algorithm both theoretically and empirically, while comparing it against relevant baselines; (d) we show how learned policies can be deployed to physical hardware.
Videos and additional information can be found at \href{https://nuria95.github.io/elf-p/}{\tt nuria95.github.io/elf-p}.
\nocite{coumans2021, kingma2014}

\section{Related Work}
\label{sec:rel_work}

\subsection{Planning Algorithms}
Classical motion planning methods to move a robot in a collision-free path are based on sampling \cite{kavraki1996probabilistic, lavalle2001randomized, karaman2011sampling} or constrained optimization \cite{ratliff2009chomp, schulman2014motion} and can efficiently find long-horizon paths. However, they do not allow a robot to alter the world. Task-and-motion planning (TAMP) methods are used when planning for a robot that operates in environments containing a large number of objects, taking actions to navigate the world as well as to change the state of the objects \cite{kaelbling2013integrated, toussaint2015logic, toussaint2017multi,garrett2020integrated, braun2021rhh}.
Despite performing well in long-horizon planning, they are limited by the need of a world model, their task representations, the dimensionality of the search space and the inability to execute high-dimensional, complex tasks robustly.

\subsection{Model-Free Learning}
Model-free learning offers a promising alternative when dealing with unknown environments but relies on a reward function defining the task at hand.
Additionally, model-free methods are sample inefficient \cite{duan_one-shot_2017} in general and have difficulties reasoning over long horizons.
Hierarchical RL (HRL) \cite{barto2003recent, dayan1992feudal, wiering1997hq, sutton1999between,dietterich2000hierarchical} methods have been proposed as a scalable solution that directly leverages temporal abstraction but in practice, they struggle with sample complexity and suffer from brittle generalization \cite{nasiriany2022augmenting}.
Several HRL algorithms rely on goal-conditioned policies \cite{Kaelbling93learningto, andrychowicz2017hindsight, gupta_relay_2019, christen2020hide} for low-level control but they tend to make training unstable (see discussion in \cite{nachum2018data}).
To address this, several works have adopted classical motion planning techniques to replace the high-level policy (e.g. tree search methods \cite{eysenbach2019search,sermanet2021broadly}). In contrast, we sidestep the problem by replacing the low-level policy with a predefined set of task-primitives and train a model-free high-level policy to control their execution.
Closely related to our work, \cite{ICLR16-hausknecht, dalal2021accelerating,nasiriany2022augmenting} use Parameterized Action MDPs \cite{masson2016reinforcement} in which the agent executes a parameterized primitive at each decision-making step. These methods also rely on a specified library of primitives but, unlike ours, they are parameterized. However, this versatility in primitive instantiation comes at a low efficiency cost since the agent needs to explore large amounts of parameters to solve long horizon tasks. Other methods also use motion planners as a low-level controller and learn an RL policy in a high-level action space \cite{relmogen2021, yamada21a, angelov2020}.

\subsection{Guiding Exploration in Large Action Spaces}
As exploration remains a fundamental challenge for RL agents, previous works have attempted to ease the burden of planning in complex spaces by pruning the actions available at each step.
Invalid action masking has been proposed in large strategy games to restrict sampling to a fixed subset of the action space  \cite{vinyals2017new, berner2019dota, ye2020mastering}. However, these methods assume that the set of illegal actions is given a priori. The case in which a random subset of all actions is available at each step was also formally studied by \cite{boutilier2018planning}.
When prior information is not available, several methods naturally propose to learn which actions are suitable. In the action elimination literature, this is achieved with \cite{zahavy2018learn} or without  \cite{even2003action} assuming the availability of additional supervision from the environment, although the latter case remains constrained to tabular settings.
Another relevant framework is that of affordances \cite{gibson1977theory, khetarpal2020what, costales2022possibility}, which measure the possibility for single actions to achieve desired future configurations. Affordances can again be learned from an explicit signal from the environment \cite{khetarpal2020what} or utilize prior information. %(e.g. regarding intents).
Our method is perhaps more closely related to behavioral priors, which can be learned from offline trajectories \cite{pertsch2020spirl, pertsch2021skild, singh2021parrot} or online interaction \cite{tirumala2022behavior} and can be used to direct exploration \cite{singh2021parrot} or regularize learned policies \cite{tirumala2022behavior}. Behavioral priors can be modeled as conditional probability distributions over the action space. Our method also learns a state-conditional model of possible actions from data and does not require explicit supervision (i.e. validity labels for the agent's actions).
However, while behavioral prior literature largely focuses on continuous actions and soft constraints (e.g. KL-regularization), our method learns over a discrete set of actions and relies on a hard integration of the prior into the training process.

\section{Problem Statement}
\label{sec:statement}
We model the environment as a goal-conditioned Markov Decision Process represented as an 8-tuple $(\mathcal{S}, \mathcal{G}, \mathcal{A}, P, R, \rho_0, \rho_g, \gamma)$ with possibly continuous states $s \in \mathcal{S}$ and goals $g \in \mathcal{G}$, discrete actions $a \in \mathcal{A}$, transition kernel $P(\cdot|s,a)$, reward function $R(s, g)$, initial state distribution $\rho_0$, goal distribution $\rho_g$ and discount factor $\gamma$. We focus on sparse reward signals: $R(\cdot|s, g) = \mathds{1}_{\{|f(s)-g|_d < \epsilon\}}$, where $|\cdot|_d$ is an arbitrary distance metric, $f: \mathcal{S} \to \mathcal{G}$ is a projection and $\epsilon$ is a threshold.
We additionally denote by $\pi: \mathcal{S \times G \to A}$ a goal-conditioned stationary policy.

Crucially, we assume access to a fixed dataset collected through \textit{play}, following an unknown behavior policy $\pi_\beta$.
Play data \cite{lynch_learning_2019} can be inexpensively collected by a human operator controlling a robot to achieve arbitrary environment configurations while interacting with the objects at hand.
This is useful to extract \textit{affordances} \cite{gibson1977theory}, representing the subset of possible actions which are feasible in a current situation. For example, a drawer can be pushed or pulled, but not moved sideways.
In our framework, \textit{play} happens at a high level, more specifically at the level of motion primitives or skills, where a human can choose from a predefined set of primitives, which are generally available (e.g. precoded behaviors for real-word hardware such as for Spot robot \cite{noauthor_spot_nodate} or as in \cite{saycan2022arxiv}).
Another solution, which could involve recording low-level action sequences, and segmenting them into meaningful skills as in \cite{zhu2022bottom,konidaris2010constructing, niekum2015online}, is left for future works.
The resulting play dataset consists of $N$ high-level state-action pairs $\mathcal{D} = \{(s_i, a_i)\}_{i=1}^N$ where a high level action $a_i$ represents the primitive enacted by the play agent in state $s_i$.

Following the goal-conditioned framework \cite{schaul2015universal}, we intend to find a policy $\pi: \mathcal{S \times G \to A}$ that maximizes the expected discounted return
\begin{equation}
J(\pi) = \mathbb{E}_{g \sim \rho_g, \mu^\pi}\left[\sum_{t=1}^\infty \gamma^{t-1}R(s_t, g)\right],
\end{equation}
under the trajectory distribution \mbox{$\mu^\pi (\tau | g) = \rho_0 (s_0) \prod_{t=0}^{\infty} P(s_{t+1} | s_t,a_t)$ with $a_t=\pi(s_t, g)$}.

We remark that through this paper, the notation of atomic actions $a \in \mathcal{A}$ is used to represent non-parametric motion primitives. We effectively abstract away the complexity of low-level control and let the action space $\mathcal{A}$ be composed of a finite set of high-level behaviors which are, in practice, executed through established motion planning methods. As a consequence, a single timestep in the MDP corresponds to the execution of a single primitive. We finally remark that, while the method described in Section \ref{sec:method} is designed and deployed in this particular setting, it remains generally applicable (e.g. in MDPs with low-level actions).
\section{Efficient Learning of High Level Plans from Play (\alg)}
\label{sec:method}

We now present our algorithm \alg for solving long-horizon tasks with motion primitives and play-guided RL.
To address exploration, we first learn a discrete behavioral prior that eliminates infeasible actions from the set of primitives and hence prunes the search space (\Cref{ssec:learn_prior}, Figure \ref{fig:spot_priors}).
Next, in \Cref{ssec:learn_maskedQ1} we propose and motivate an integration scheme for the learned prior onto Q-learning. Consequently, our agent can focus on learning Q-values for feasible state-action pairs only, as the prior generally lifts the burden of learning to avoid infeasible actions.

\begin{figure}[b]
\begin{center}
\vspace{-1mm}
\includegraphics[width=\columnwidth]{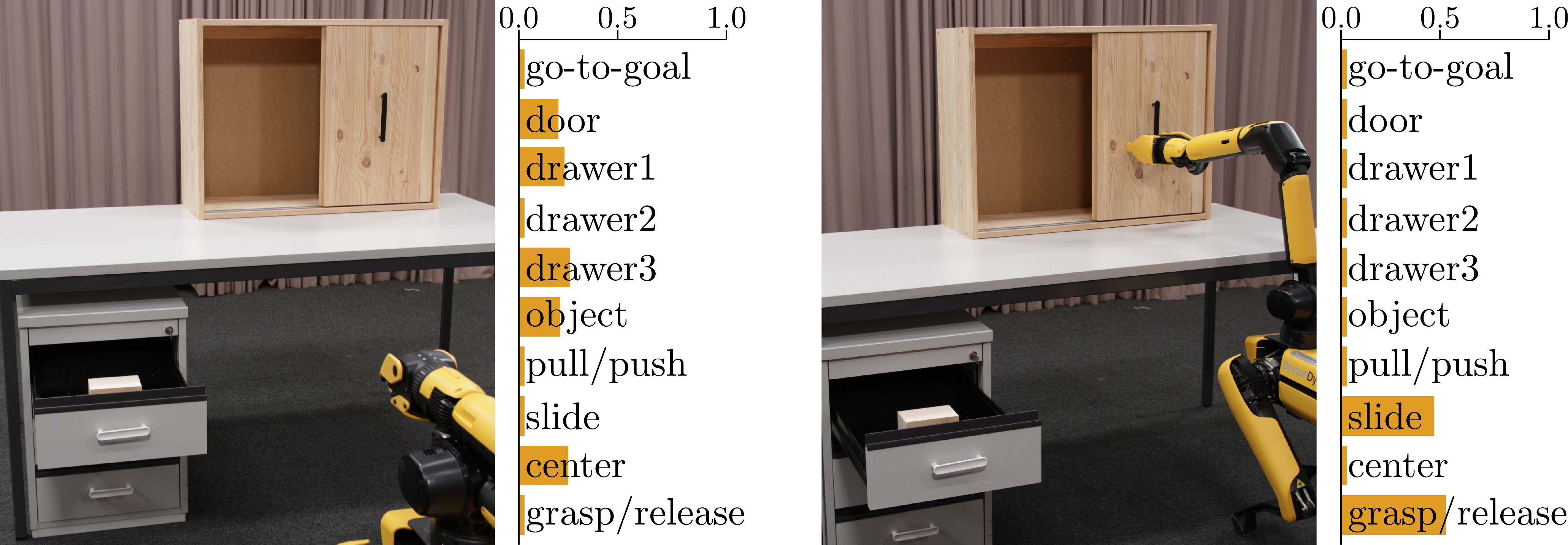}
\end{center}
\vspace{-2mm}
\caption{The trained behavioral prior $\pi^\beta$ learns to estimate the set of feasible primitives in different environment configurations. (Left) When the agent is at the center of the space, the prior favors primitives that involve reaching elements through a free-collision path (go to \textit{door}, \textit{drawer}s or above the \textit{object}), but prevents actions that involve object manipulation. (Right) When being close to the door, the prior learns correct object affordances such as \textit{grasp} or \textit{slide}.}
\vspace{-5mm}
\label{fig:spot_priors}
\end{figure}

\subsection{Learning a Prior from Play} \label{ssec:learn_prior}

Let us start by considering the play dataset \mbox{$\mathcal{D} = \{ (s_1, a_1),  ... (s_N, a_N) \}$} introduced in Section \ref{sec:statement}. While lacking explicit exploitative behaviors, play data inherently favors actions that are \textit{feasible}: while not necessarily desirable for a given goal, those actions are likely to be successfully executed given the current state of the environment.
We aim to extract this information by estimating a (goal-independent) behavioral prior $\pi^\beta(\cdot|s)$, modeled as a conditional categorical distributions over primitives, which associates feasible primitives to higher likelihoods.

We parameterize $\pi^\beta$ through a neural network with learnable parameters $\omega$, which can be trained via standard mini-batch first order techniques to minimize the negative log-likelihood $\mathcal{L}_{NLL}(\omega)$:
\begin{equation}
    \mathcal{L}_{NLL}(\omega) = \mathop{\mathbb{E}}_{\substack{B \sim \mathcal{D}}}\Bigg[\frac{1}{|B|} \sum_{\substack{(s, a) \in B}} -\log \pi^\beta_\omega(a|s) \Bigg],
\label{eq:prior_loss}
\end{equation}
where $B$ represents a batch of state-action pairs sampled uniformly from the dataset $\mathcal{D}$.

\subsection{Selecting Feasible Actions}
Given the learned behavioral prior $\pi_\beta(\cdot|s)$, we propose to turn its soft probability distribution into hard, binary constraints on the action space. We thus define a threshold-based selection operator $\alpha: \mathcal{S} \to \mathcal{P(A)}$:
\begin{equation}
 \alpha(s) = \{ a \in \mathcal{A} \mid \pi_\beta(a|s) > \rho\},
\label{eq:mask}
\end{equation}
where $\mathcal{P(\cdot)}$ represents a powerset.
Ideally, an action $a \not \in \alpha(s)$ would not be chosen by an optimal goal-conditioned policy in state $s$. We refer to $\alpha(s)$ as the set of \textit{feasible} actions for state $s$. See Figure \ref{fig:spot_priors} for a visualization.

\subsection{Learning in a Reduced MDP}\label{ssec:learn_maskedQ1}

The learned selection operator $\alpha(s)$ enables the definition of an auxiliary MDP $\mathcal{M'}$, which we refer to as \textit{reduced} MDP. The definition and solution of the reduced MDP lay at the core of our method.
We model $\mathcal{M'}$ through a generalized definition of MDPs \cite{puterman1994} in which available actions depend on the current state: in the state $s$, the action space is restricted to a subset $\alpha(s) \subseteq \mathcal{A}$.

\begin{definition}[Reduced MDP]
  Given an MDP \mbox{$\mathcal{M}=(\mathcal{S}, \mathcal{G}, \mathcal{A}, P, R, \rho_0, \rho_g, \gamma)$} and a selection operator $\alpha: \mathcal{S} \to \mathcal{P(A)}$ such that for all $s \in \mathcal{S}$, $\alpha(s) \neq \emptyset$, the reduced MDP $\mathcal{M'}$ is defined as the 9-tuple $(\mathcal{S}, \mathcal{G}, \mathcal{A}, P, R, \rho_0, \rho_g, \alpha, \gamma)$.
\end{definition}
\vspace{2mm}
where the assumption on $\alpha(s)$ ensures that there exist a feasible action in each state and Q-values can be well-defined.
Intuitively, $\mathcal{M'}$ encodes the same environment as $\mathcal{M}$ but restricts the set of action-state pairs.

We note that in PAC RL settings, the analysis of sample complexity \cite{kakade2003sample} (i.e. the number of steps for which a learned policy is not $\epsilon$-optimal with high probability) produces upper bounds that are directly dependent on the number of state-action pairs \cite{lattimore2012pac}.
In particular, model-free PAC-MDP algorithms can attain a sample complexity that is $\tilde O(N)$, where $N \leq |\mathcal{S}||\mathcal{A}|$ is the number of state-action pairs, and $\tilde O(\cdot)$ represents $O(\cdot)$ where logarithmic factors are ignored \cite{strehl2006pac}.
Learning in $\mathcal{M'}$ instead of $\mathcal{M}$ is thus desirable and could lead to near-linear improvements in sample efficiency as the number of infeasible actions grows.
Crucially, under mild assumptions, the optimal policy for $\mathcal{M'}$ can not only be retrieved more efficiently but also attains optimality in the original MDP $\mathcal{M}$ (see \OurAppendix \ref{app:optimality}).

We thus propose a practical modified Q-learning iteration on the original MDP $\mathcal{M}$, which is equivalent to performing Q-learning directly in the reduced MDP $\mathcal{M'}$.
Given a transition $(s, a, s', g, r)$:
\begin{equation}
    Q(s, a, g) \gets (1-\delta)Q(s,a,g) + \delta(r + \gamma \max_{a' \in \alpha(s)}Q(s', a', g)),
\end{equation}
where the value of the next-state $s'$ is only computed over feasible actions and $\delta$ is the learning rate.

Under common assumptions (i.e. infinite visitation of each state-action pair and well-behaved learning rate in tabular settings \cite{bertsekas1996neuro}), this algorithm converges to $Q_{\mathcal{M'}}^*$, from which we can easily extract $\pi_{\mathcal{M}}^*(s,g) = \pi_{\mathcal{M'}}^*(s,g) = \argmax_{a \in \alpha(s)} Q_{\mathcal{M'}}(s,a,g)$. For simplicity, we will from now on refer to  $Q_{\mathcal{M'}}$ as $Q$.

In practice, following the goal-conditioned framework \cite{schaul2015universal}, we scale this algorithm by parameterizing $Q_\theta(s,a,g)$ through a neural network.
Inspired by recent success in scaling Q-learning \cite{watkins1992q} to high-dimensional spaces \cite{van2016deep, mnih2015human, fujimoto2018addressing} while reducing overestimation bias, we learn the parameters $\theta$ of the Q-function using Clipped Double Q-learning \cite{fujimoto2018addressing}, which minimizes the temporal difference (TD) loss:
\begin{gather}
    \mathcal L (\theta_j) = \mathbb E_{\substack{(s,a,s',g,r) \sim \mathcal{B} }} \big [(y_j - Q_{\theta_j}(s_t,a_t,g))^2]\label{eq:q-loss}, \text{ with } \\
    \nonumber y_j =  r + \gamma \min_{i=1,2}  Q_{\theta'_i}(s', \argmax_{a_{t+1} \in \alpha(s_{t+1})}  Q_{\theta_j}(s_{t+1}, a_{t+1}, g), g ),
\end{gather}
where $j \in \{1,2\}$, and $\theta_j, \theta_j'$ are the parameters for Q and target Q-networks respectively and where $(s,a,s',g,r)$ tuples are sampled uniformly from an experience replay buffer \cite{lin1992reinforcement} $\mathcal{B}$ exploiting the off-policy nature of Q-learning.
We collect experience following an $\eps$-greedy exploration mechanism on the feasible action set $\alpha(s) \subseteq \mathcal{A}$. We summarize our approach in Algorithm \ref{algorithm}.

\setlength{\textfloatsep}{0mm}
\begin{algorithm}[ht]
 \caption{\alg} \label{alg:algorithm}
\begin{algorithmic}
\small
    \INPUT Trained prior $\pi^\beta_w$ , randomly initialized $Q_\theta$ and Q-target $Q_{\theta'}$ with $\theta = \theta'$, probability threshold $\rho$, learning rate $\eta$, replay buffer $\mathcal{D}=\emptyset$, soft update parameter $\mu$.%, number of training episodes N, episode length T.
    \FOR{episode$=1,\ldots$ N}
        \STATE Sample $s \sim \rho_0$, $g \in \rho_g$.
        \FOR{step$=1,\ldots T$}
        \STATE Compute feasible action set $\alpha(s_t)$ in \eqref{eq:mask}, compute $Q_\theta(s_t, a, g)$ for each $a \in \mathcal{A}$.
        \STATE With probability $\epsilon$ sample $a_t \sim \mathcal{U}\{\alpha(s_t)\}$, else select $a_t = \argmax_{a \in \alpha(s_{t+1})}Q_\theta(s_t, a, g)$.
        \STATE Execute $a_t$ and store transition $(s_t, a_t, r_t, s_{t+1}, g)$ in $\mathcal{D}$.
        \STATE Sample minibatch of transitions $(s_j, a_j, r_j, s_{j+1}, g)$ uniformly from $\mathcal{D}$.
        \STATE Compute TD loss $\mathcal L_{\mathrm{Q}} (\theta)$ in \eqref{eq:q-loss}.
        \STATE Gradient step $\theta \gets \theta - \eta \nabla \mathcal L_{\mathrm{Q}}(\theta)$.
        \STATE Perform soft-update on $\theta' \gets \mu \theta + (1-\mu) \theta' $.
        \ENDFOR
    \ENDFOR
\end{algorithmic}
\label{algorithm}
\end{algorithm}
\section{Results}\label{sec:Experiments}

In this Section, we empirically evaluate the performance of \alg against several baselines. We investigate whether \alg can retrieve optimal policies more efficiently than existing approaches that leverage prior data, and whether the number of infeasible actions that are attempted throughout the training process is reduced.
We evaluate \alg on a variety of manipulation tasks in a simulated environment and then deploy the learned policies on real hardware to evaluate the ease of transfer.
For further details on the environment, additional experiments and extended results, see \OurAppendix \ref{app:env}, \ref{app:experiments}, \ref{app:additional_exp_details} and \ref{app:hardware_exp}.

\begin{figure}[b]
\vspace{2mm}
\begin{center}
\includegraphics[width=\columnwidth]{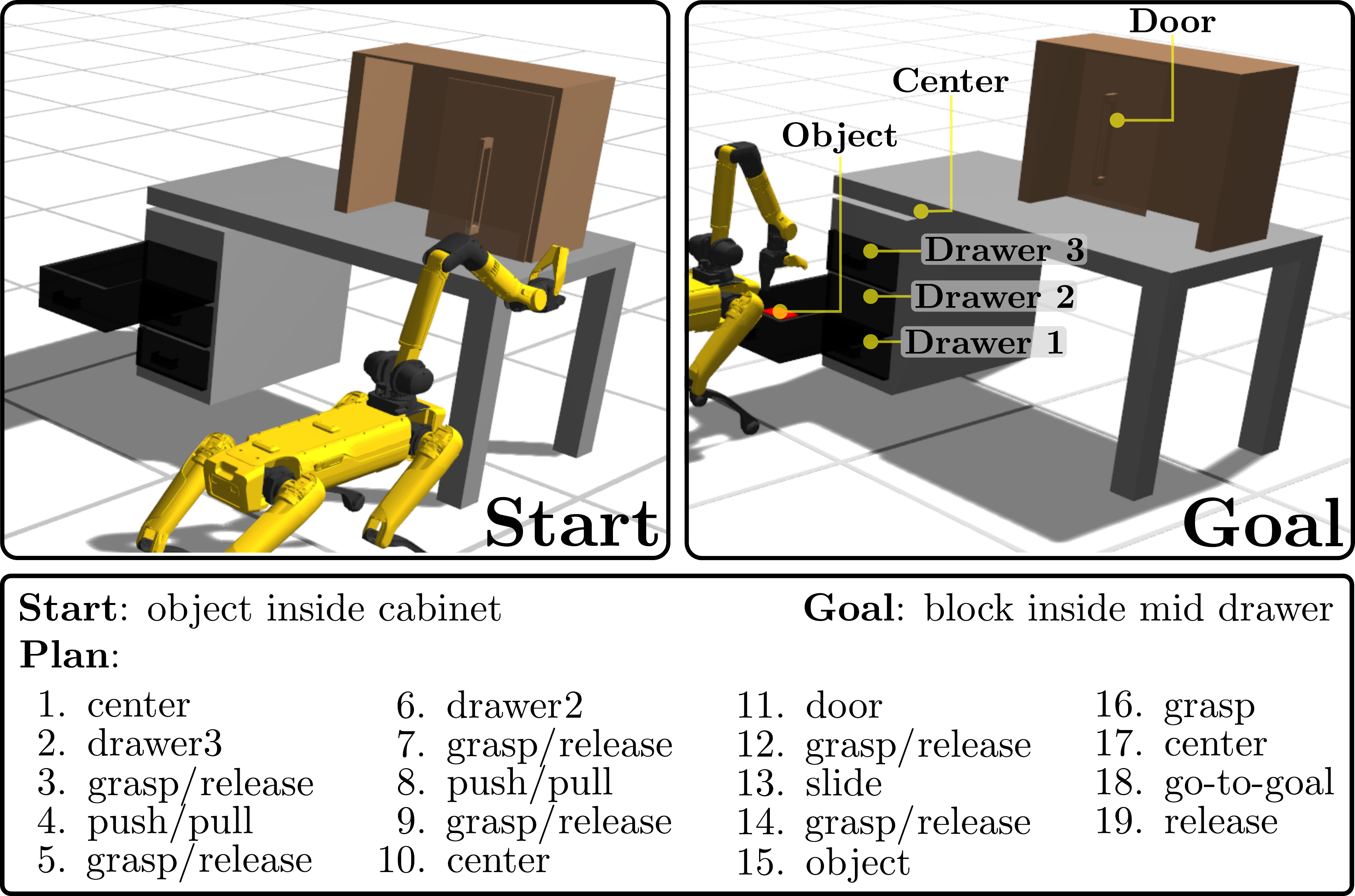}
\end{center}
\vspace{-2mm}
\caption{An example of a task.
Starting from an arbitrary configuration, the agent needs to place the block, which is hidden behind the cabinet door, in the mid drawer, which is blocked by the drawer above. The sequence of 19 actions required to achieve the goal is shown below. }
\label{fig:task_example}
\end{figure}

\begin{figure*}[!htb]
\begin{center}
\vspace{0mm}

\minipage{\textwidth}
\small
\centering
\textcolor{elfp}{\rule[2pt]{20pt}{3pt}} \textrm{ELF-P} \quad
\textcolor{ddqn}{\rule[2pt]{20pt}{3pt}} \textrm{DDQN} \quad
\textcolor{her}{\rule[2pt]{20pt}{3pt}} \textrm{DDQN+HER}
\textcolor{prefill}{\rule[2pt]{20pt}{3pt}} \textrm{DDQN+Prefill} \quad
\textcolor{ddqnfd}{\rule[2pt]{20pt}{3pt}} \textrm{DDQNfD} \quad
\textcolor{spirl}{\rule[2pt]{20pt}{3pt}} \textrm{SPIRL}
\endminipage
\vspace{-1mm}
\end{center}
\minipage{\linewidth}
\includegraphics[width=0.24\textwidth]{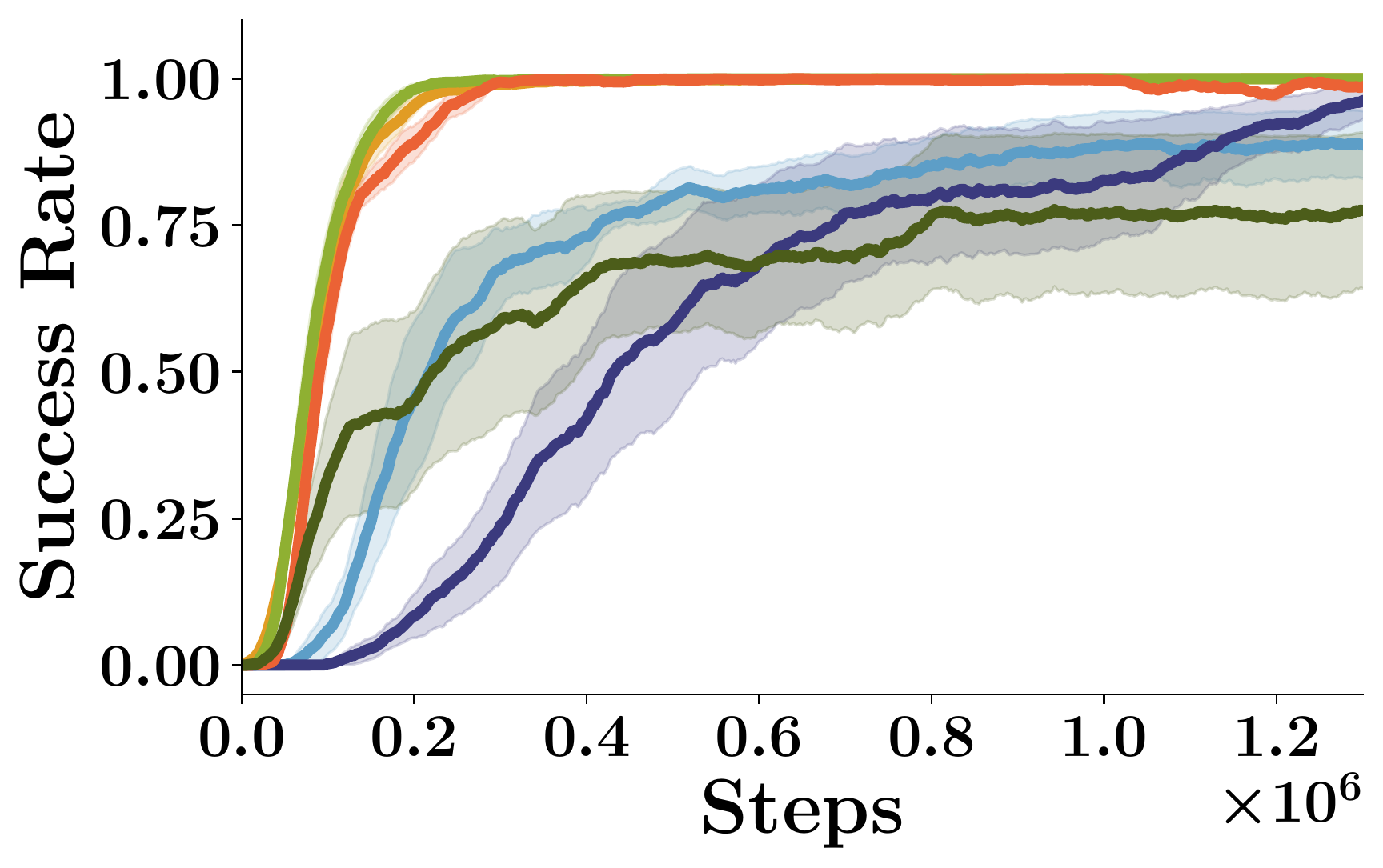}
\label{fig:medium_success}
\includegraphics[width=0.24\textwidth]{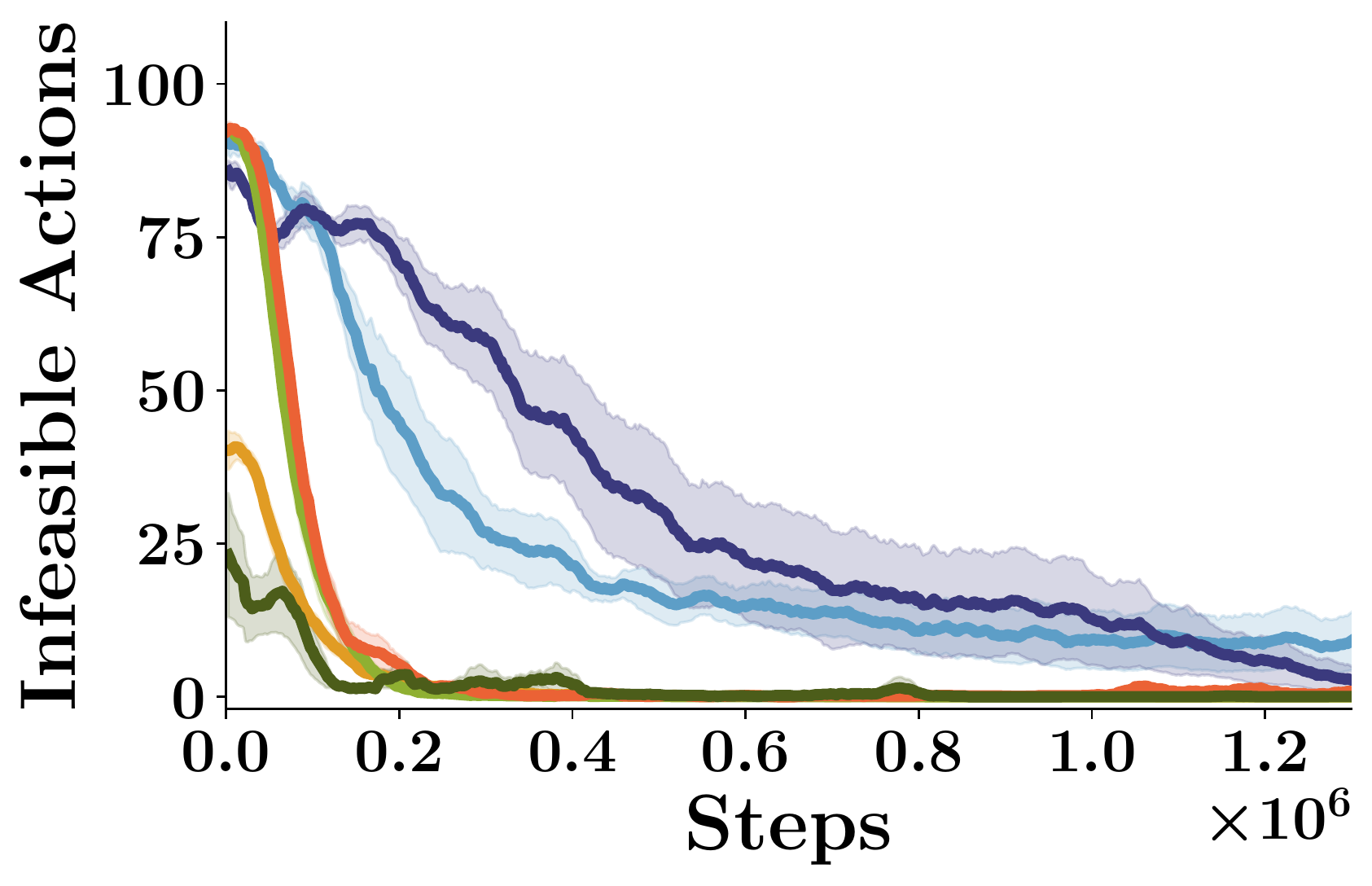}
\label{fig:medium_inv}
\includegraphics[width=0.24\textwidth]{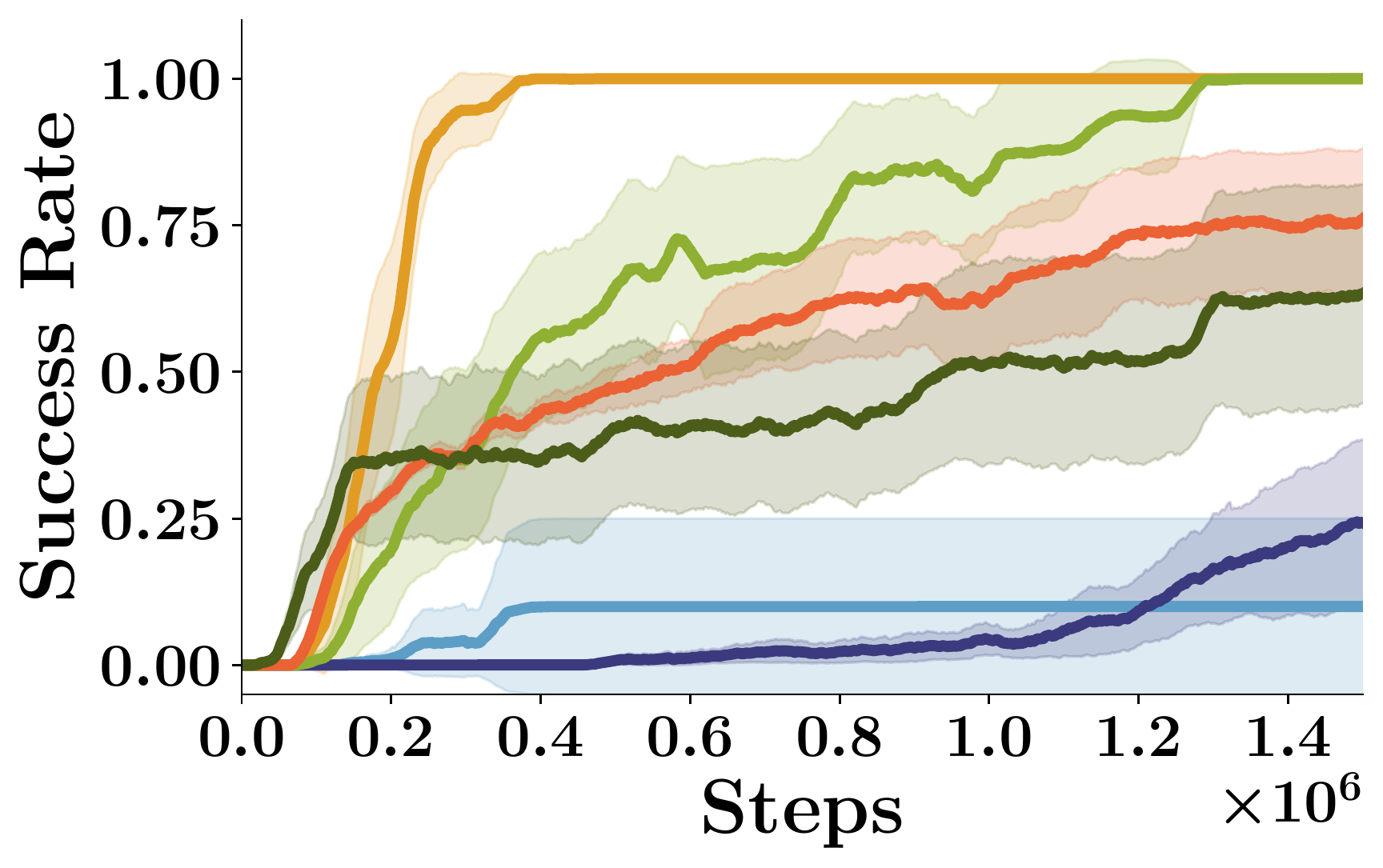}
\label{fig:hard_success}
\includegraphics[width=0.24\textwidth]{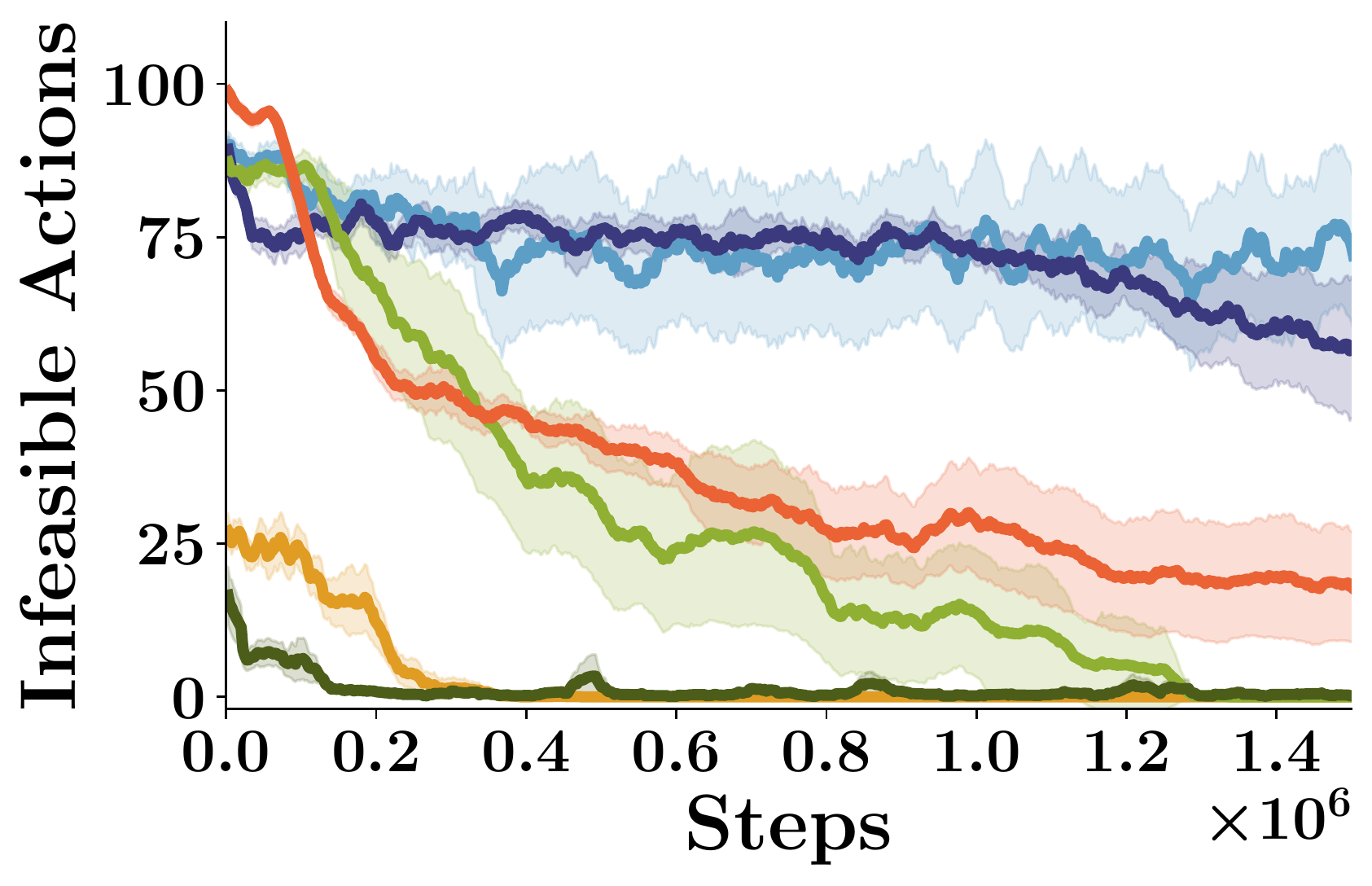}
\label{fig:hard_inv}
\endminipage
\vspace{-5mm}
\caption{Success rate and number of infeasible actions attempts for the Medium (left) and Hard (right) task variants. (Results are averaged across 10 independent random seeds, shaded area represents standard deviation).}
\label{fig:simulation-results}
\vspace{-0.6cm}
\end{figure*}

\begin{figure}
\begin{center}%[!htb]
\vspace{2mm}

\minipage{\linewidth}
\small
\centering
\textcolor{elfp}{\rule[2pt]{20pt}{3pt}} \textrm{ELF-P} \quad
\textcolor{soft-elfp}{\rule[2pt]{20pt}{3pt}} \textrm{SOFT ELF-P}
\endminipage
\vspace{-1mm}
\end{center}
\minipage{\linewidth}
\includegraphics[width=0.49\textwidth]{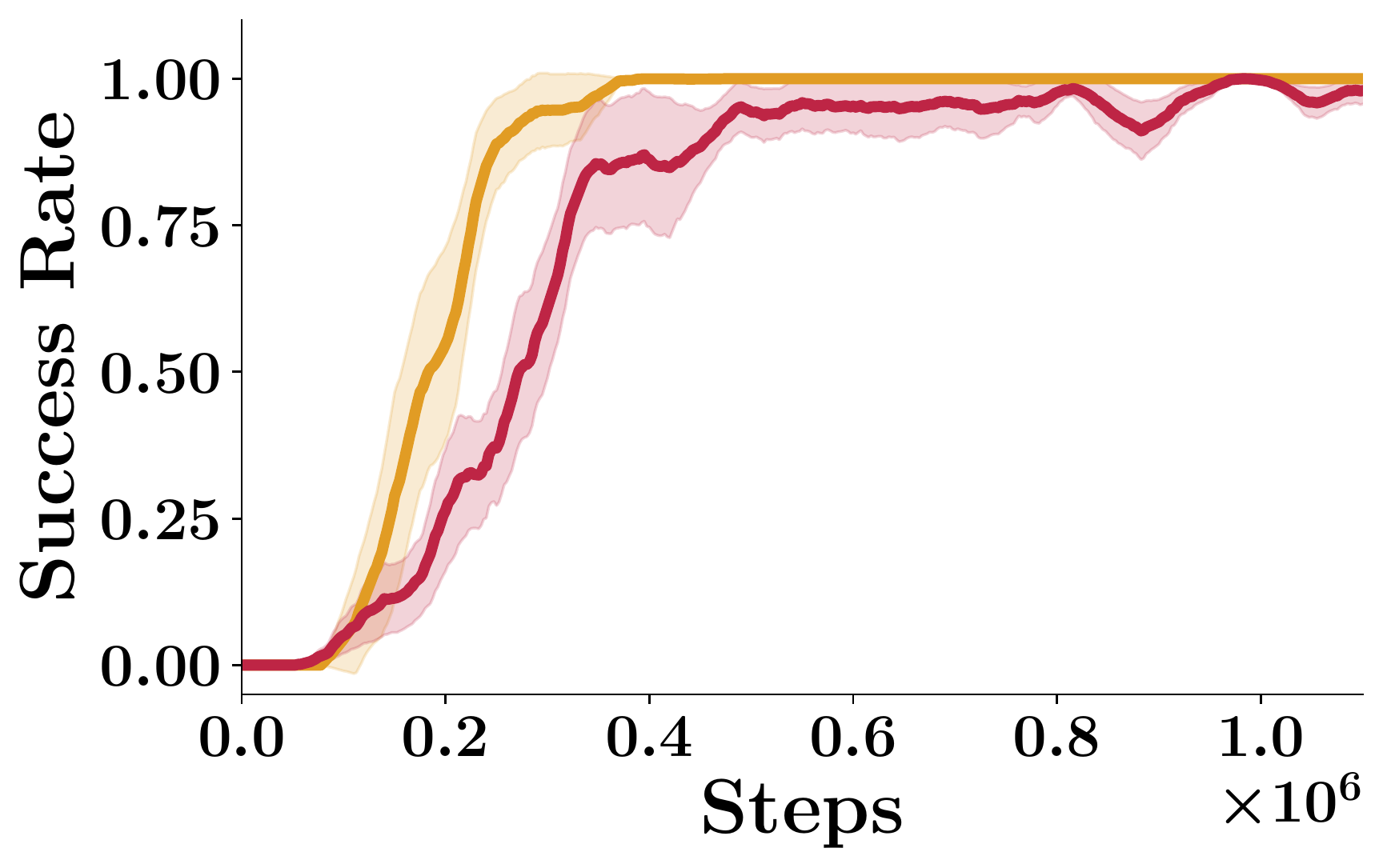}
\label{fig:hard_soft_success}
\includegraphics[width=0.49\textwidth]{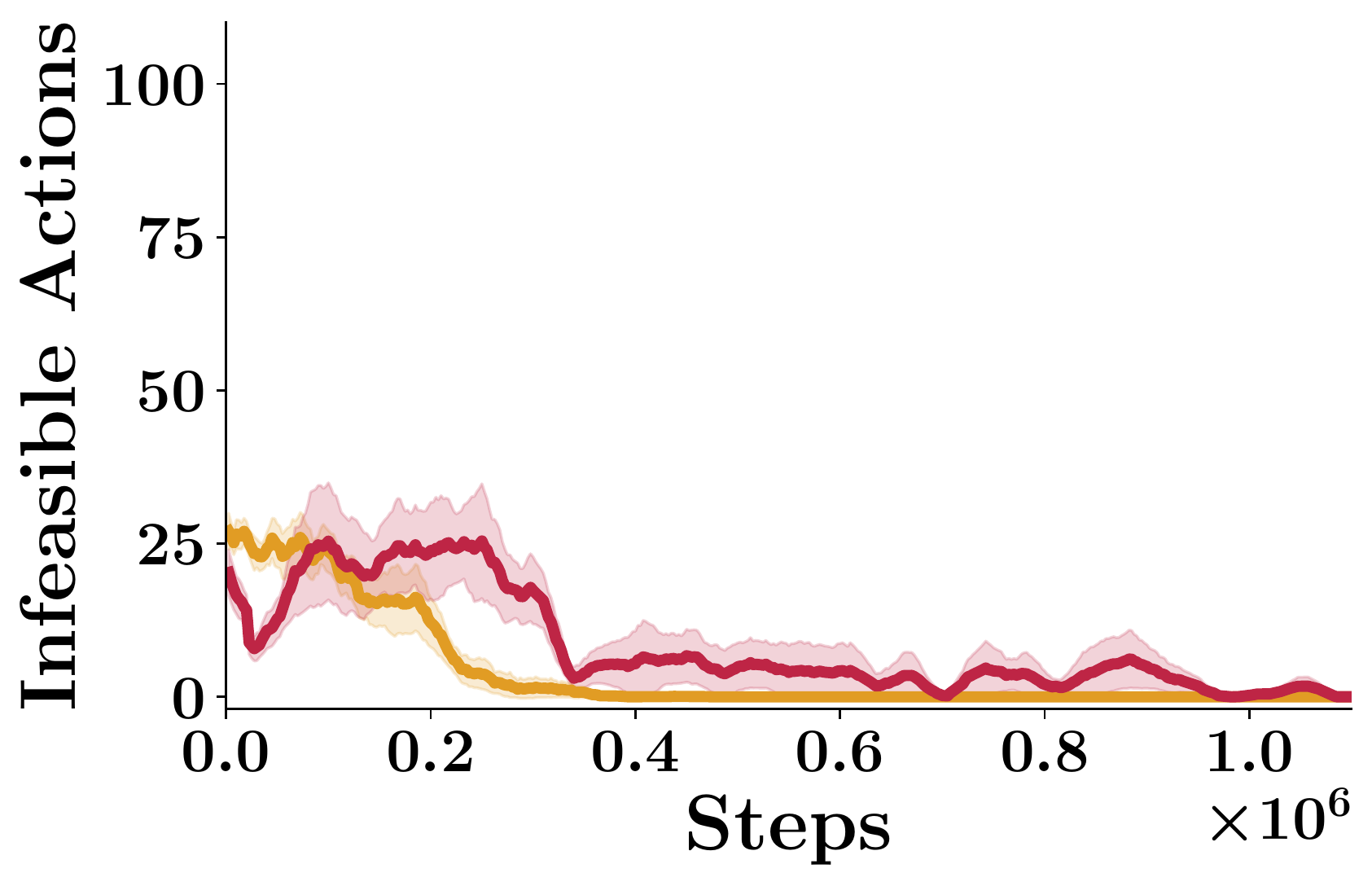}
\label{fig:hard_soft_inv}
\endminipage
\vspace{-5mm}
\caption{Comparison between \alg and Soft-\alg. \alg. Success rate and number of infeasible actions attempts for the Hard  task variant. (Play dataset size $10^4$). Results are averaged across 10 independent
random seeds, shaded area represents standard deviation) }
\label{fig:soft_vs_hard}
\end{figure}
We compare \alg with an unmodified DDQN \cite{van2016deep}, with DDQN with Hindsight Experience Replay (DDQN+HER) \cite{andrychowicz2017hindsight},
with two other state-of-the-art methods  that can leverage prior data, namely DDQNfD \cite{hester2018deep} (with 1-step returns and unprioritized experience replay for a fair comparison), DDQN+Prefill, which initializes its replay buffer with the task-agnostic data $\mathcal{D}$, and with SPiRL \cite{pertsch2020spirl}, which  also uses play data to constrain exploration, but adopts an actor-critic framework with soft prior regularization, which we adapted to operate over discrete action spaces. Extended details are in \OurAppendix \ref{app:baselines}.

\subsection{Simulation Experiments} \label{exp:simulation}
We train and evaluate the algorithm on a variety of simulated long-horizon manipulation tasks in which a robot needs to interact with a realistic desk environment featuring several household items. Possible behaviors include sliding a cabinet door, opening several drawers and moving a wooden block (see Figure \ref{fig:task_example}).
The tasks are defined in the same environment that play data was obtained in. While play data can be collected by humans, for simplicity we use a scripted policy (details are in \OurAppendix \ref{app:additional_exp_details}). % that samples uniformly from the set of feasible actions at each state
The goal of each task is to place the block in an arbitrary desired position, which generally also requires manipulating the rest of the items. The episode ends when exceeding a predefined number of steps or when the goal is achieved. The reward function is sparse and is equal to one if and only if the task is completed in time, otherwise a reward of zero is given.

We evaluate on tasks distributions with two levels of difficulty: \textit{Medium} (M) and \textit{Hard} (H). The average number of actions required to solve the (M) tasks for an expert planner is 14, with the longest task requiring 16 steps. For the (H) tasks, the average is 23 steps, with the longest task requiring 29 steps. See Figure \ref{fig:task_example} for an example.

\vspace{2mm}
\subsubsection{Performance Analysis}

Figure \ref{fig:simulation-results} shows the success rate and the number of infeasible actions attempts averaged over 50 evaluation episodes across the two task distributions.
In the (M) tasks \alg shows the same sample efficiency as the best baseline, while in the (H) tasks it is significantly better than all competitors.
For both (M) and (H) tasks, the number of infeasible actions that are attempted by the agent is significantly lower than other baselines. We note that the execution of infeasible actions is due to inaccuracies in the trained prior.

While vanilla DDQN cannot master the (M) task and fails to solve the (H) task, Prefill+DDQN and DDQNfD manage to solve the (M) task as efficiently as \alg, proving that having access to task-agnostic play data is beneficial for the learning process. However, when the action space grows in size, their performance decreases significantly.
We also notice that although managing to reach over 0.5 success rate for both tasks, SPIRL shows a much lower performance than our method and most competitive baselines, hence integrating a soft prior via KL-regularization might not be beneficial in this setting.
Finally, we observe that adding HER relabeling helps slightly on the (H) tasks but hurts performance on the (M) tasks.
We report that using HER with \alg also hurts performance (see \OurAppendix \ref{app:experiments}). HER relies on a gradual growth of the frontier of achieved goals, which can be used for relabeling. Since the dynamics in our environment are not smooth (i.e. a single action often leads to large changes in the state), we hypothesize that HER cannot interpolate to unseen goals and hence hurts performance.
This behavior was also pointed out by \cite{eysenbach2019search}.

We note that all methods in this section that can leverage prior data have access to a play dataset with $10^4$ datapoints, which simulates an amount of data that could in practice, be collected by a few human operators.

\vspace{2mm}
\begin{wrapfigure}{r}{44mm}
\vspace{-3mm}
\centering
\includegraphics[width=\linewidth]{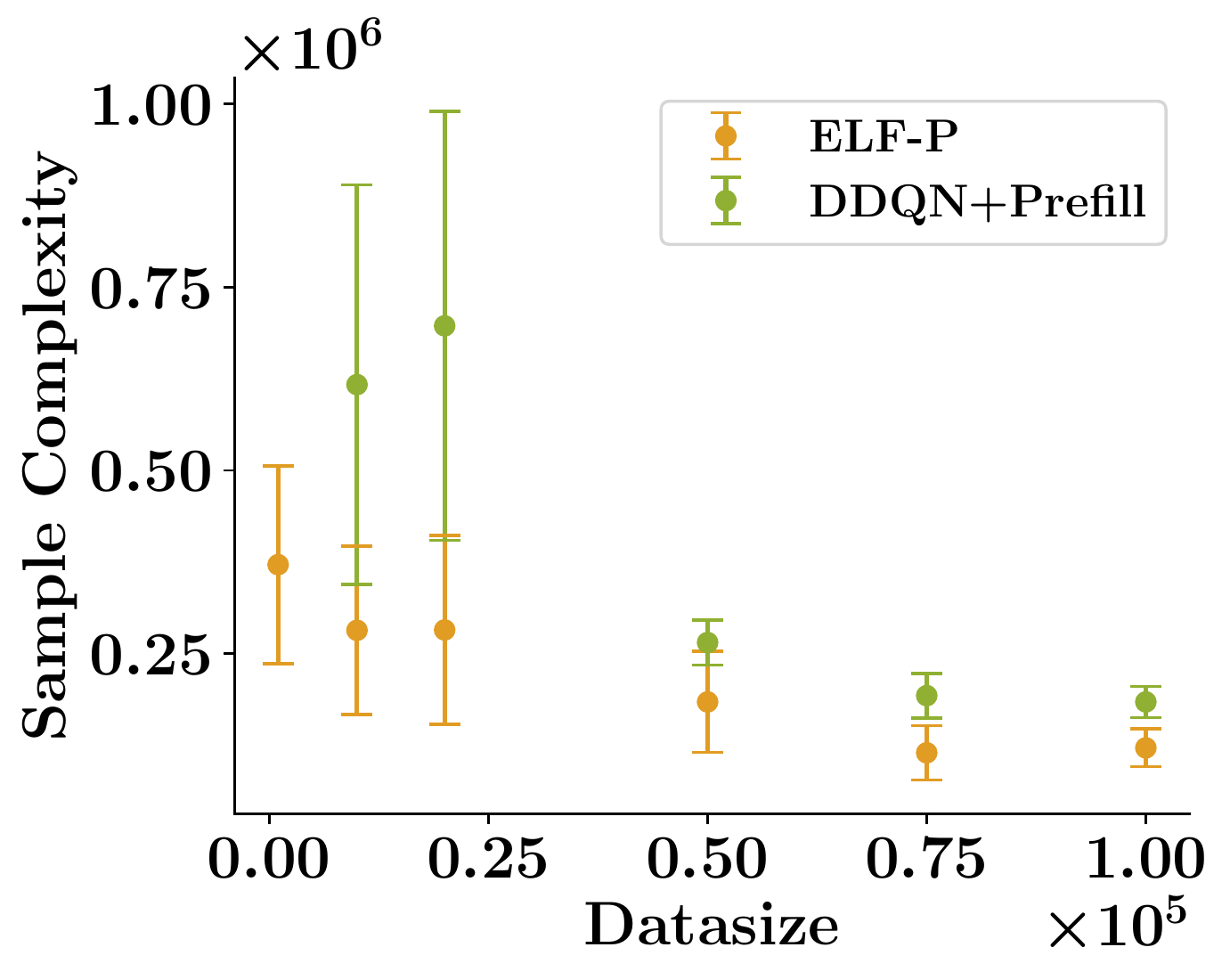}
\vspace{-6mm}
\caption{Effect of play dataset size on sample complexity (DDQN+Prefill fails to solve the task with 1000 datapoints).}
\vspace{-3mm}
\label{fig:boxplot_dataset size}
\end{wrapfigure}
\subsubsection{Robustness to Play Dataset Size}
\label{sec:robustness-datasize}
We study the effect of play dataset size on training performance for both \alg and DDQN+Prefill (which affects the amount of data used for training the prior for \alg and the amount of data used to prefill the replay buffer for DDQN+Prefill). In particular, we measure the number of timesteps required to reach a success rate of 0.95 on (H) tasks when using different dataset sizes.
We report results in Figure \ref{fig:boxplot_dataset size}. We observe that our method retains most of its performance when the prior is trained on minimal quantities of data whereas DDQN+Prefill struggles in low-data regimes. This shows that our method is more suited for the setting of \textit{play}, and is able to learn efficiently with reasonable amounts of data, i.e., data that could in practice be collected by a few human \mbox{operators ($\sim$ 1h30min} of interaction data).
When more data is available, we generally find the performance gap between methods to decrease, while ELF-P consistently outperforms baselines in all data regimes. See \OurAppendix \ref{app:robustness-datasize} for additional results.

\vspace{2mm}
\begin{wrapfigure}{rt}{44mm}
\vspace{-4mm}
\centering
\includegraphics[width=\linewidth]{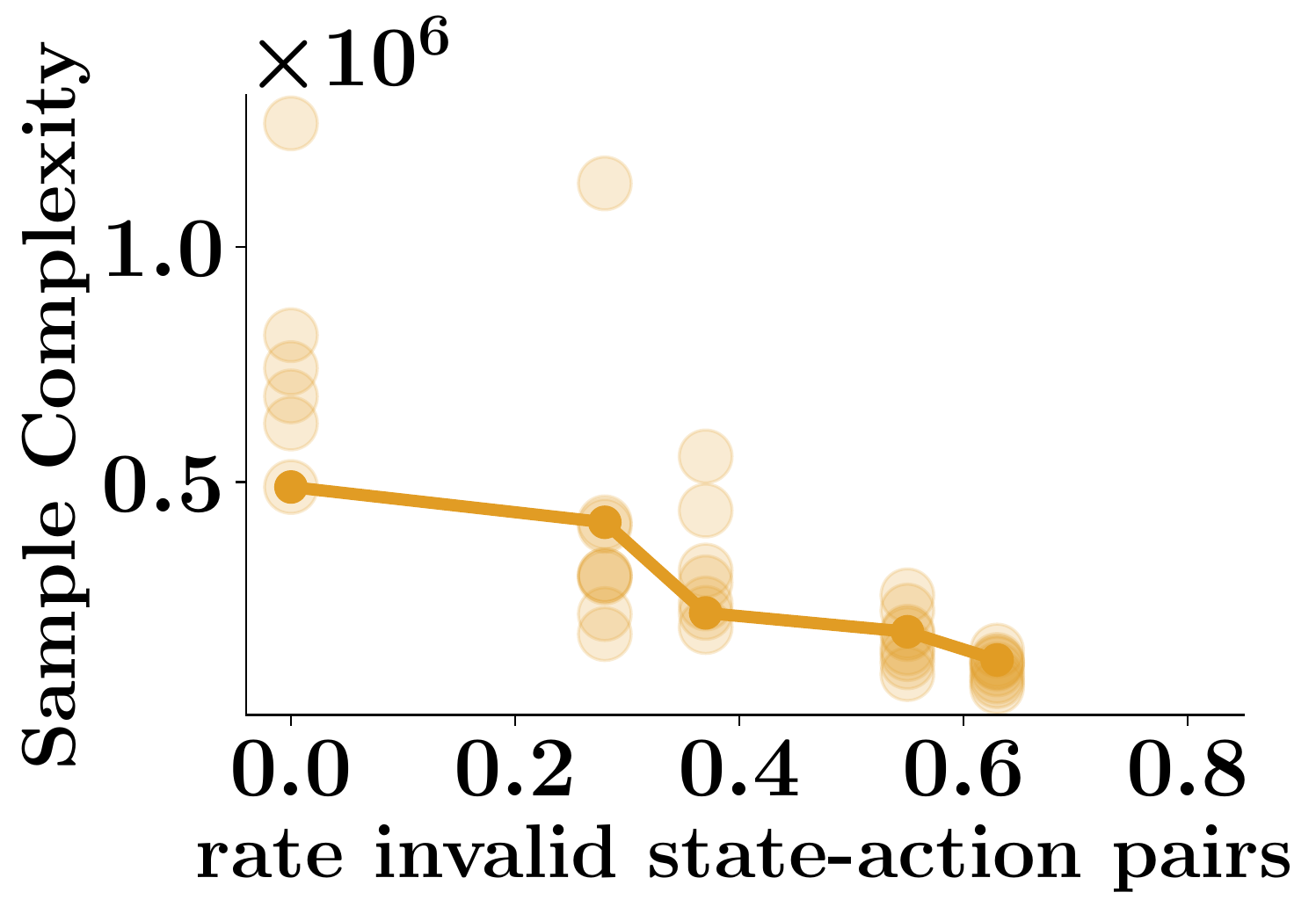}
\vspace{-6mm}
\caption{Effect of $\rho$ on sample complexity. Means across seeds are connected by lines.}
\vspace{-3mm}
\label{fig:sample-complxity}
\end{wrapfigure}
\subsubsection{Sample Complexity}
We further analyze the gains in sample complexity and compare them with the theoretical rates mentioned in Subsection \ref{ssec:learn_maskedQ1}, which hypothesize a linear dependency on the number of feasible action-state pairs in PAC-RL settings.
In particular, we measure the number of timesteps required to reach a success rate of 0.95 on (M) tasks with increasing values for the threshold $\rho$, thus pruning the state-action space more aggressively.
In Figure \ref{fig:sample-complxity} we plot this quantity against the ratio of infeasible state-action pairs for each value of $\rho$, computed according to the visitation distribution of a random policy.
While our algorithm recovers the performance of DDQN for $\rho=0$, we indeed observe a  linear trend in the decrease of sample complexity as the number of infeasible action-state pairs increases. Further details are in \OurAppendix \ref{app:sample-complexity}.
% \vspace{3mm}

\subsubsection{Soft Prior Integration}
We implement  Soft-\alg, an ablation for our method which softens the integration of the prior into learning. This modification could potentially allow Soft-\alg to recover from degenerated priors, i.e., when play data has significant distribution mismatch with the target tasks of interest.
Soft-\alg samples from the prior (instead of sampling from the feasible set $\alpha$) both during initial exploration phase and while performing $\epsilon$-greedy exploration. Additionally, during exploitation, it multiplies the softmax of Q-values by the prior, thus biasing the greedy action selection towards the prior.
Finally, given the soft integration of the prior, it performs Q-learning over the set of all actions, instead of over the reduced set of feasible actions.
In Figure \ref{fig:soft_vs_hard} we compare the performance of \alg with Soft-\alg on the (H) task. Results on the (M) task are in the \OurAppendix \ref{app:soft-elfp}. We observe that while Soft-\alg is able to learn both medium and hard tasks, it has lower sample efficiency than \alg. The hard prior integration of \alg alleviates the Q-network from learning values for all state-action pairs: it can focus on learning Q-values for feasible state-action pairs only and ignore Q-values for unfeasible actions. This shows one of the main contributions of our algorithm.
While binarizing the prior results in greater learning efficiency, a soft integration could be  useful when dealing with degenerated priors and we reserve further exploration on the topic for future work.

\subsection{Real-world Experiments} \label{exp:spot}
In this experiment, we evaluate how \alg performs when transferred to physical hardware.
Figure \ref{fig:method} (upper right) depicts our physical robot workspace and further details can be found in the \OurAppendix \ref{app:hardware_exp}.
We use \textit{Boston Dynamics} Spot robot \cite{noauthor_spot_nodate} for all our experiments because of its stability and robustness when following desired end-effector trajectories.
The high-level planner, trained in simulation, is used at inference time to predict the required sequence of motion primitives to achieve a desired goal. We then instantiate each of them using established motion planning methods \cite{zimmermann2022dca} to create optimal control trajectories. For further implementation details, we refer the reader to \cite{zimmermann2021go}.

We show that \alg can be easily transferred to physical hardware while maintaining its goal-reaching capabilities.
In Figure \ref{fig:spot_priors} we show the agent executing several primitives in order to complete a task. We refer the reader to \OurAppendix \ref{app:hardware_exp} and the video material for extended visualizations.

\section{Discussion and Future Work} \label{sec:discussion}
\label{sec:conclusion}

We present \alg, a method that bridges motion planning and deep RL to achieve complex long-horizon manipulation tasks. 
We show that by integrating a discrete behavioral prior learned from easily collectable play data, we can achieve significant gains in sample efficiency compared to other baselines that leverage prior data.
This approach has the added benefit of largely avoiding infeasible actions during training.
By planning in a two-level hierarchy, we show how our method allows reasoning over long-horizons in a mixed decision space in an efficient manner. We finally demonstrate that within this framework, \alg can be easily transferred to physical hardware without further modifications, showing the potential of combining readily available motion planners with sample efficient RL algorithms. 

Despite showing promising results, our method assumes full observability of the state space and perfect execution of the motion primitives. These limitations could be addressed by introducing perception and by querying \alg at higher frequencies at inference time to ensure primitive completion.
 Furthermore, choosing a suitable level of abstractions for skills remains an open question, whose answer could relax the need for providing a predefined set of skills, while still maintaining a low-dimensional parametrization.
Future work may also actively learn the behavioral prior instead of leveraging a static play dataset, reaching a compromise between sample complexity and reliance on collected data.

We expect our work to enable future research directions such as a tighter coupling between the training of the high-level planner and the execution of motion primitives.
Although introducing motion planning in the training loop is time consuming, we believe that the significant gains in sample efficiency demonstrated in this work can help address this challenge.

\section*{ACKNOWLEDGMENT} \label{sec:acknowledg}

Núria Armengol and Marco Bagatella are financially supported by the Max Planck ETH Center for Learning Systems.
We thank Simon Zimmermann and Oliver Stark for the help throughout the project.

%%%%%%%%%%%%%%%%%%%%%%%%%%%%%%%%%%%%%%%%%%%%%%%%%%%%%%%%%%%%%%%%%%%%%%%%%%%%%%%%

%%%%%%%%%%%%%%%%%%%%%%%%%%%%%%%%%%%%%%%%%%%%%%%%%%%%%%%%%%%%%%%%%%%%%%%%%%%%%%%%

%%%%%%%%%%%%%%%%%%%%%%%%%%%%%%%%%%%%%%%%%%%%%%%%%%%%%%%%%%%%%%%%%%%%%%%%%%%%%%%%

%%%%%%%%%%%%%%%%%%%%%%%%%%%%%%%%%%%%%%%%%%%%%%%%%%%%%%%%%%%%%%%%%%%%%%%%%%%%%%%%

\newpage
\bibliographystyle{IEEEtran}
\bibliography{references}
\newpage
\renewcommand{\thetable}{S\arabic{table}}
\renewcommand{\thefigure}{S\arabic{figure}}
\renewcommand{\theequation}{S\arabic{equation}}
\section*{Appendix} \label{sec:appendix}
\subsection{Environment}
\label{app:env}

\subsubsection{Training environment}

The environment is built on \textit{Pybullet} physics simulation software \cite{coumans2021} and it is shown in Figure \ref{fig:pybullet-env}.
The states $\in \mathbb{R}^{11}$ include the 3D robot's end-effector position, a binary variable representing the gripper state (open/close), the 3D position of the block and the 1D joint position for each of the 3 drawers and the door. The goal state  $\in \mathcal{R}^3$ is the desired block position (e.g. behind the door, inside the mid-drawer or somewhere on the table).
The action space is discrete and consists of 10 object-centric motion primitives, namely reaching every object, manipulating the objects (e.g. sliding or pulling), and opening and closing the gripper. Given the primitives we use are relational to objects they become implicitly parameterized by the corresponding object pose. We also include a \textit{center} action to move the robot back to the center of the desk to have greater reachability. Finally, we also include a \textit{go-to-goal} primitive that moves the end-effector over the goal position. A complete action list is shown below:
\begin{multicols}{2}
\begin{enumerate}
    \item Go to door handle
    \item Go to drawer1 handle
    \item Go to drawer2 handle
    \item Go to drawer3 handle
    \item Go to center
    \item Go to block
    \item Go to goal
    \item Grasp/release (Open/close the gripper)
    \item Pull/push
    \item Slide left/slide right
\end{enumerate}
\end{multicols}

\begin{wrapfigure}{r}{45mm}
\centering
\includegraphics[width=\linewidth]{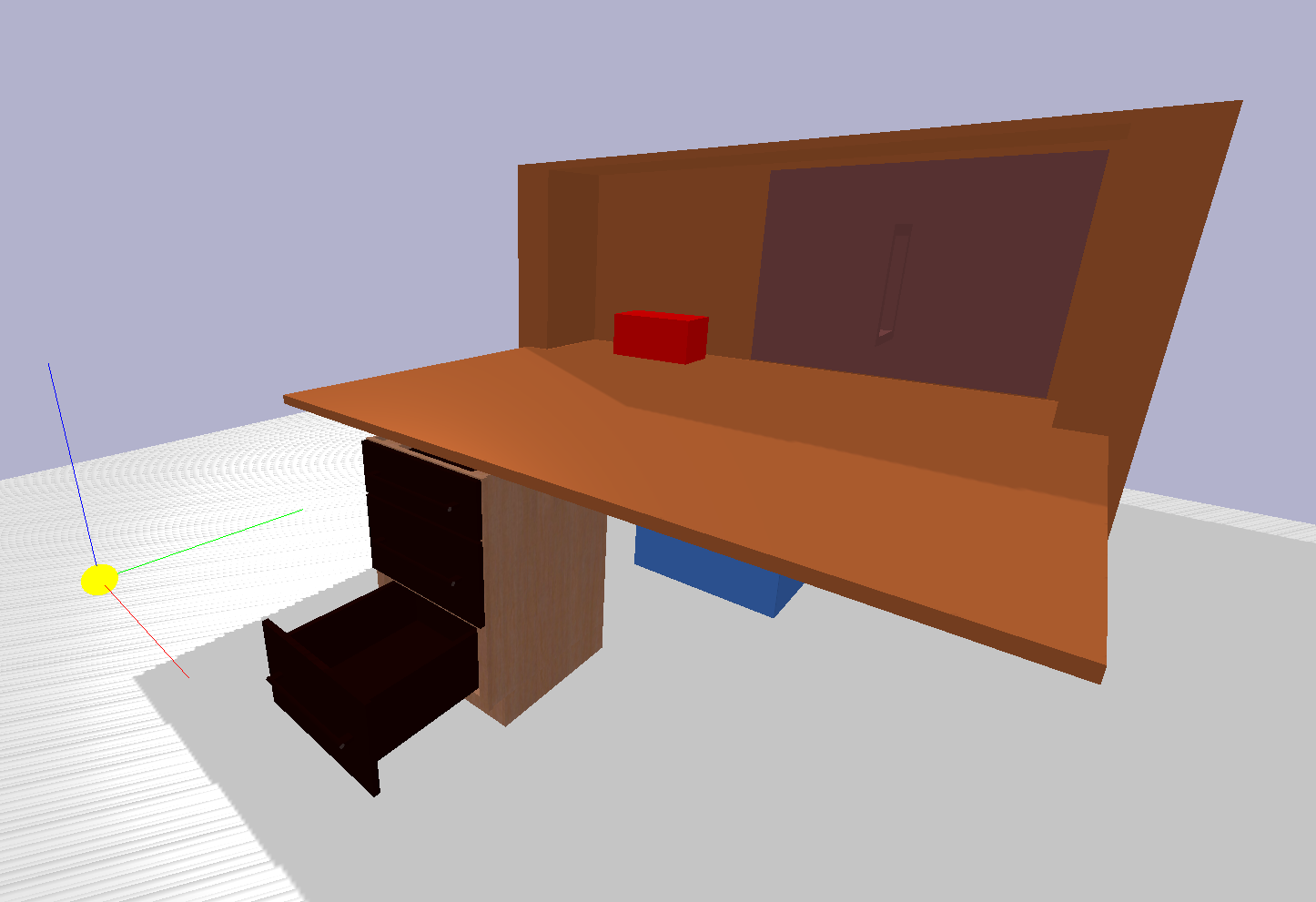}
\caption{Training desk environment with different objects that can be manipulated. The robot end-effector is represented with a yellow sphere.}
\label{fig:pybullet-env}
\end{wrapfigure}

In the case of multiple primitives for one action (i.e. actions 8, 9 and 10), the executed primitive depends on the current state (e.g. the gripper opens if its current state is closed and viceversa).

If the agent attempts an infeasible action, such as manipulating an object with the wrong primitive (e.g. pulling a sliding door) or moving in a non collision-free path, the environment does not perform a simulation step and the state remains unaltered.  To check for collisions, we use the \code{rayTest} method in \textit{Pybullet}, which performs a single raycast.

We use the sparse reward signal: $R(\cdot|s, g) = \mathbbm{1}_{\{|f(s)-g|_1 < \epsilon\}}$, where $f(s)$ extracts the current block position from the state and $\epsilon=0.1$ is a threshold determining task success.

\subsection{Extended Experimental Results} 
\label{app:experiments}
\subsubsection{HER}

We experiment with combining our method with off-policy goal relabeling \cite{andrychowicz2017hindsight}.
We evaluate different relabeling ratios $k$. As already mentioned in the Results, we observe that the greater the relabeling ratio, the slower the convergence to the optimal policy (see Figure \ref{fig:her-results}). We hypothesize that this is because the environment dynamics are not smooth and the policy fails to generalize to distant goals despite the relabeling, which may hurt performance.

\begin{figure}
\begin{center}%[!htb]
\vspace{2mm}

\minipage{\linewidth}
\small
\centering
\textcolor{elfp}{\rule[2pt]{15pt}{3pt}} \textrm{ELF-P}\quad
\textcolor{her}{\rule[2pt]{15pt}{3pt}} \textrm{ELF-P+HER(K=2)}\quad
\textcolor{ddqn}{\rule[2pt]{15pt}{3pt}} \textrm{ELF-P+HER(K=4)}
\endminipage

% \vspace{-1mm}
\end{center}
\minipage{\linewidth}
\includegraphics[width=0.49\textwidth]{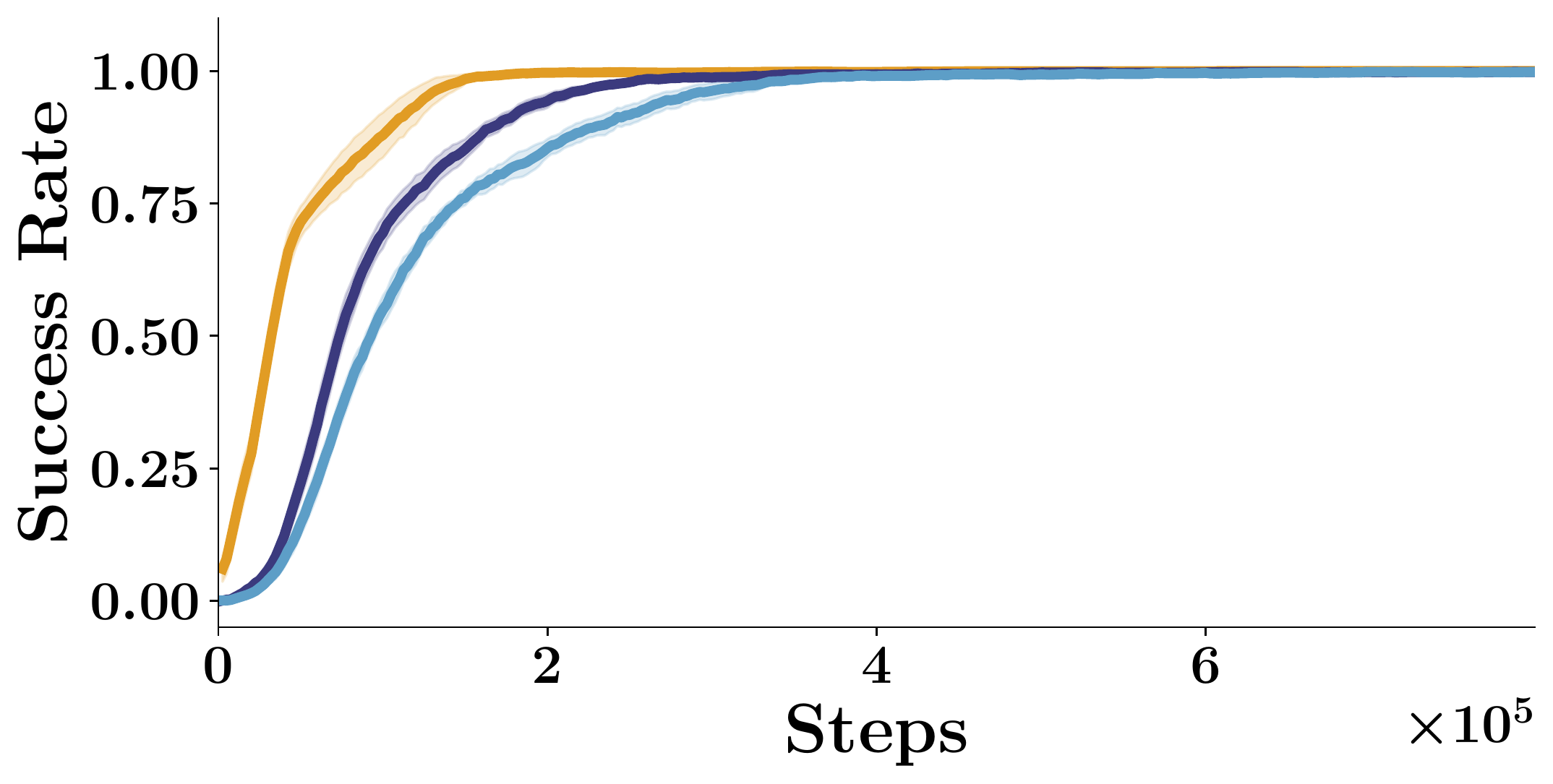}
\label{fig:her-effect-medium}
% \hspace{0.01\linewidth}
\includegraphics[width=0.49\textwidth]{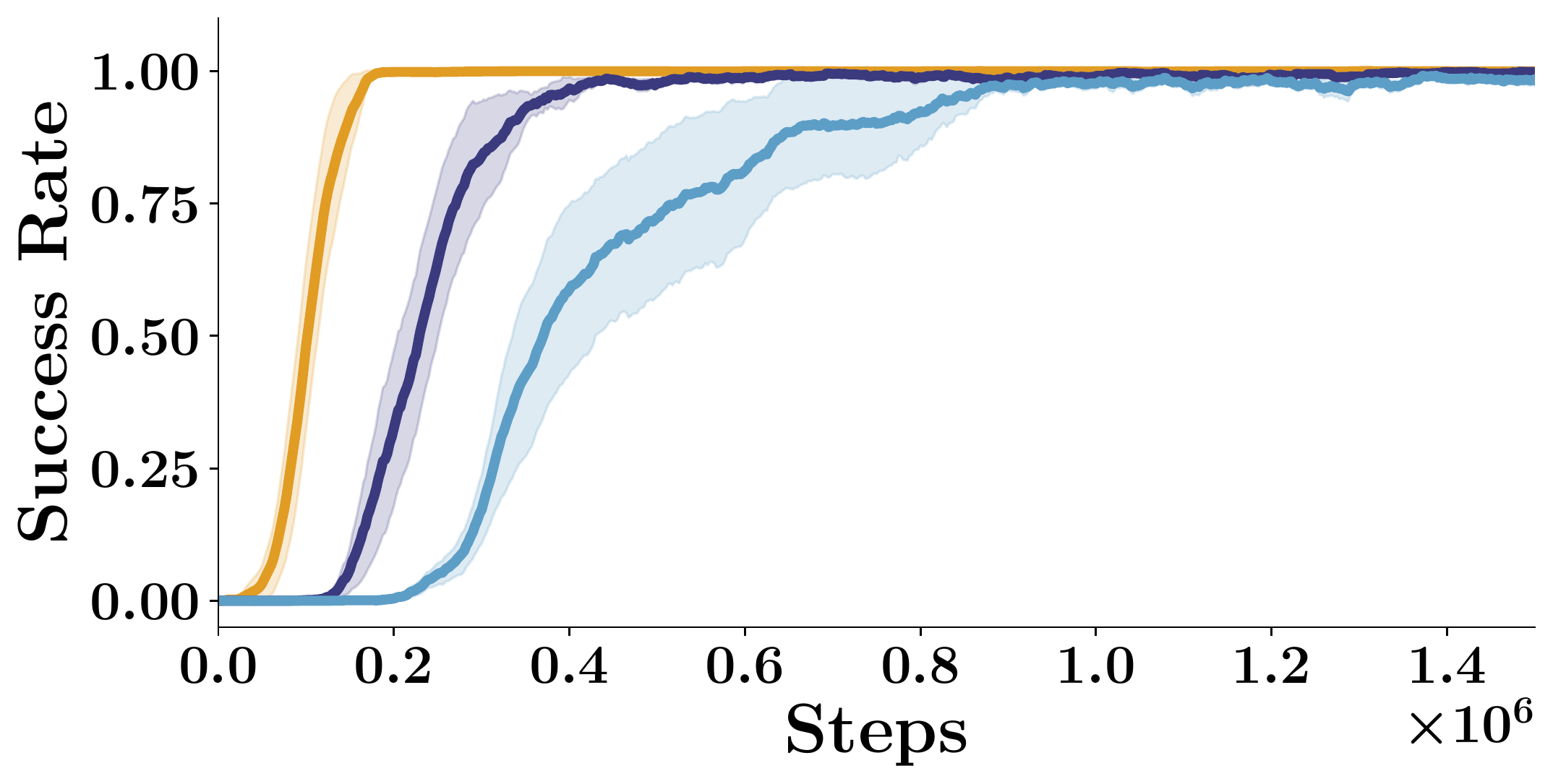}
\label{fig:her-effect-hard}
\endminipage
\vspace{-5mm}
\caption{Effect of relabeling ratio on success rate for the Medium (left) and Hard (right) task variants. Results are averaged across 10 independent random seeds, shaded area represents standard deviation. (Play dataset size $10^5$). }
% \vspace{-5mm}
\label{fig:her-results}
\vspace{2mm}
\end{figure}

\subsubsection{Sample complexity}
\label{app:sample-complexity}
Results reported in subsection \textit{Sample Complexity} in Section \ref{sec:Experiments} are obtained by running 5 independent random seeds for each value of $\rho$ in the Medium task.
We report results obtained for the Hard task in Figure \ref{fig:sample-complxity-hard}, which shows an increase of sample complexity as the number of feasible actions increases.
However, in this hard setting, several seeds for several values of $\rho$ don't reach a success rate of 0.95. This is expected given that for example $\rho=0$ recovers DDQN behavior which was shown to fail in learning the Hard task. Consequently, in order not to bias the results, we opt for only reporting the results on $\rho$ values whose seeds are always successful in learning.

\begin{wrapfigure}{r}{45mm}
\centering
\includegraphics[width=\linewidth]{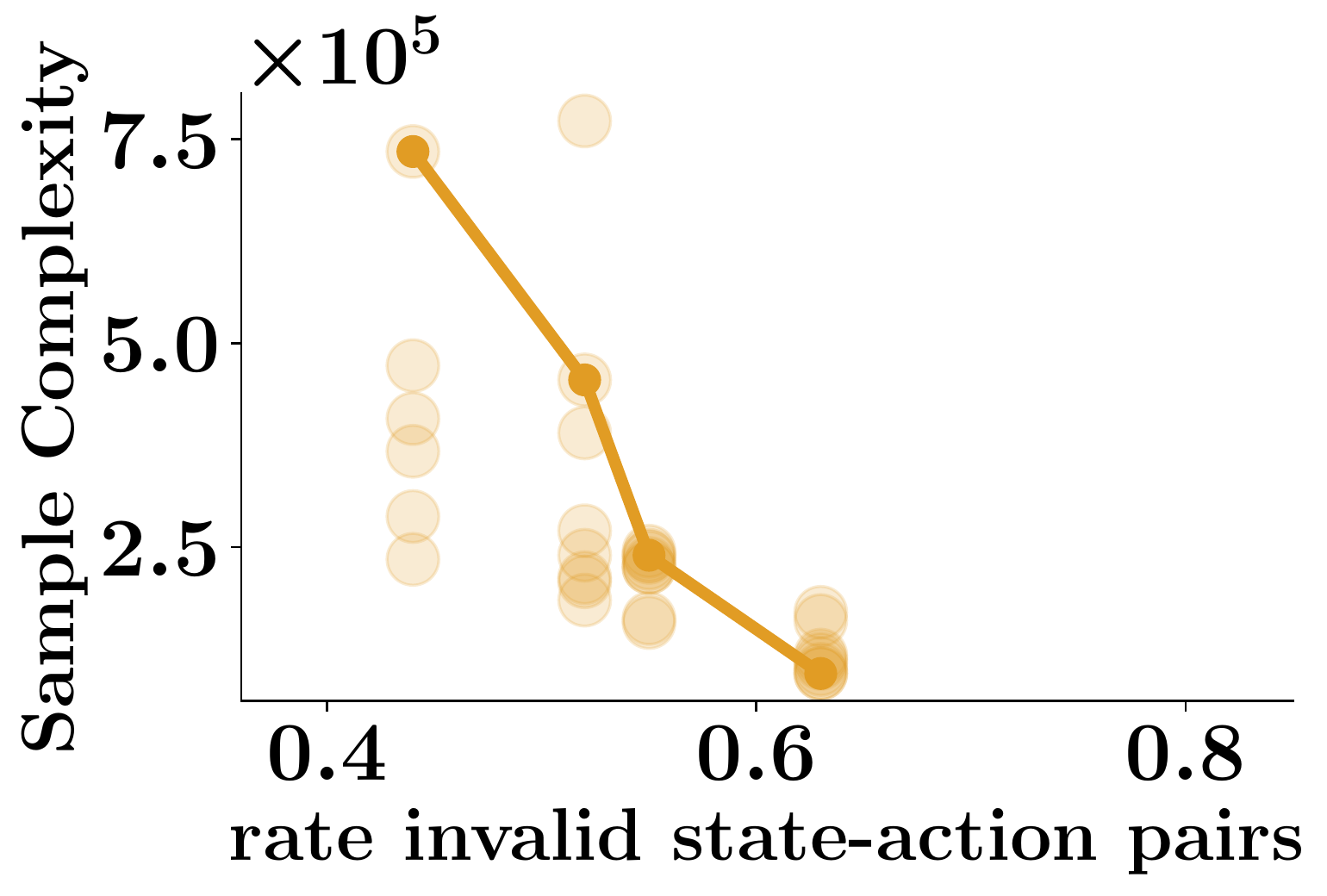}
\caption{Effect of $\rho$ on sample complexity for the Hard task. Means across seeds are connected by lines.}%.Each point represents a seed means are connected by lines.}
\label{fig:sample-complxity-hard}
\end{wrapfigure}

\subsubsection{Robustness to play dataset size}
\label{app:robustness-datasize}
We study the effect of play dataset size on training performance for both \alg and DDQN+Prefill (which affects the amount of data used for training the prior for \alg and the amount of data used to prefill the replay buffer for DDQN+Prefill). In Figure \ref{fig:trainig-curves-datasize-effect} we show the resulting training curves using 3 different dataset sizes for \alg and DDQN+Prefill.

\begin{figure*}[!htb]
\begin{center}%[!htb]

\minipage{\textwidth}
\small
\centering
\textcolor{elfp}{\rule[2pt]{20pt}{3pt}} \textrm{ELF-P} \quad
\
\textcolor{prefill}{\rule[2pt]{20pt}{3pt}} \textrm{DDQN+Prefill} \quad

\endminipage
\vspace{-1mm}
\end{center}
\minipage{\linewidth}
\includegraphics[width=0.24\textwidth]{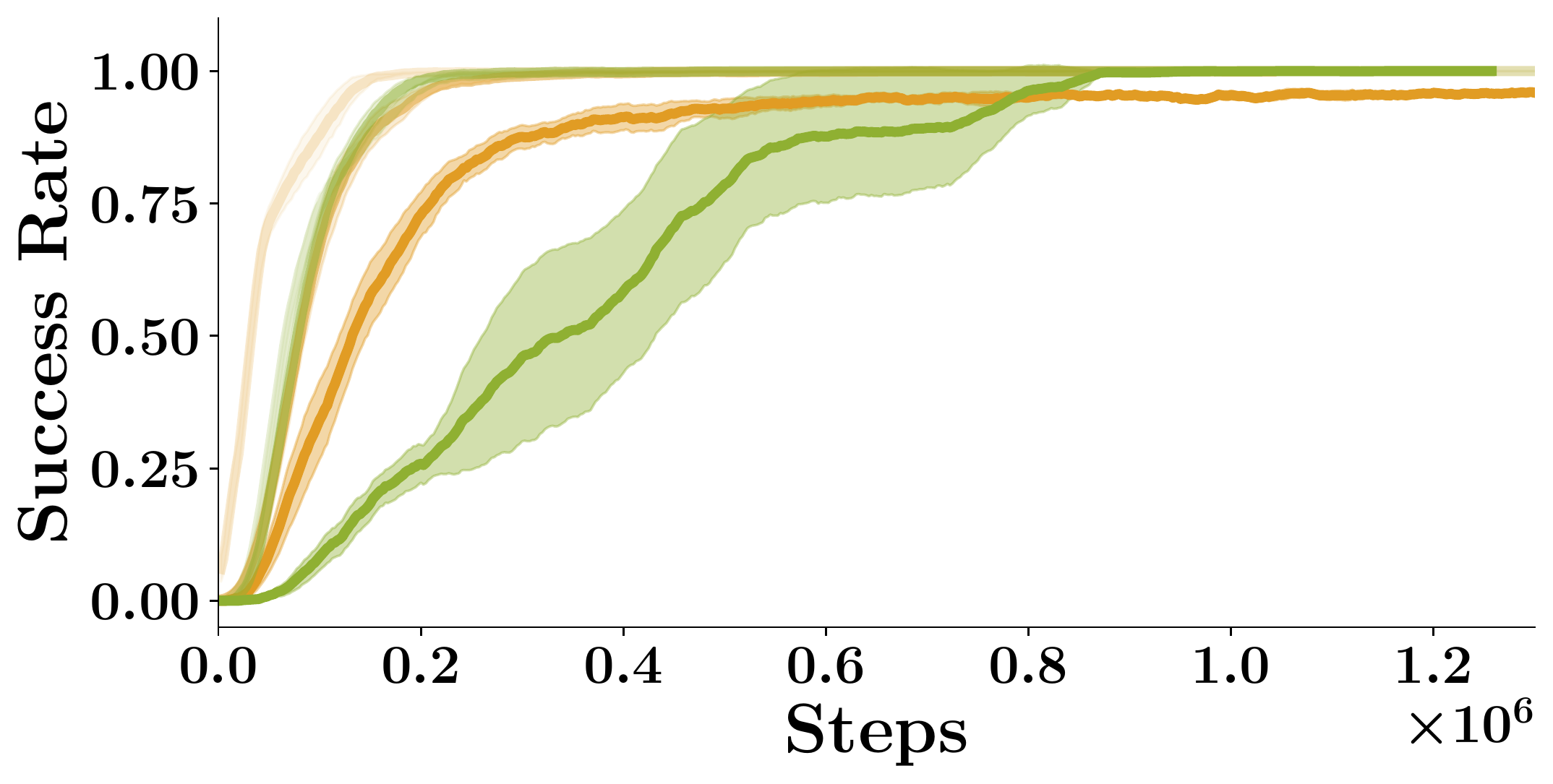}
\label{fig:medium_datasize_success}
\includegraphics[width=0.24\textwidth]{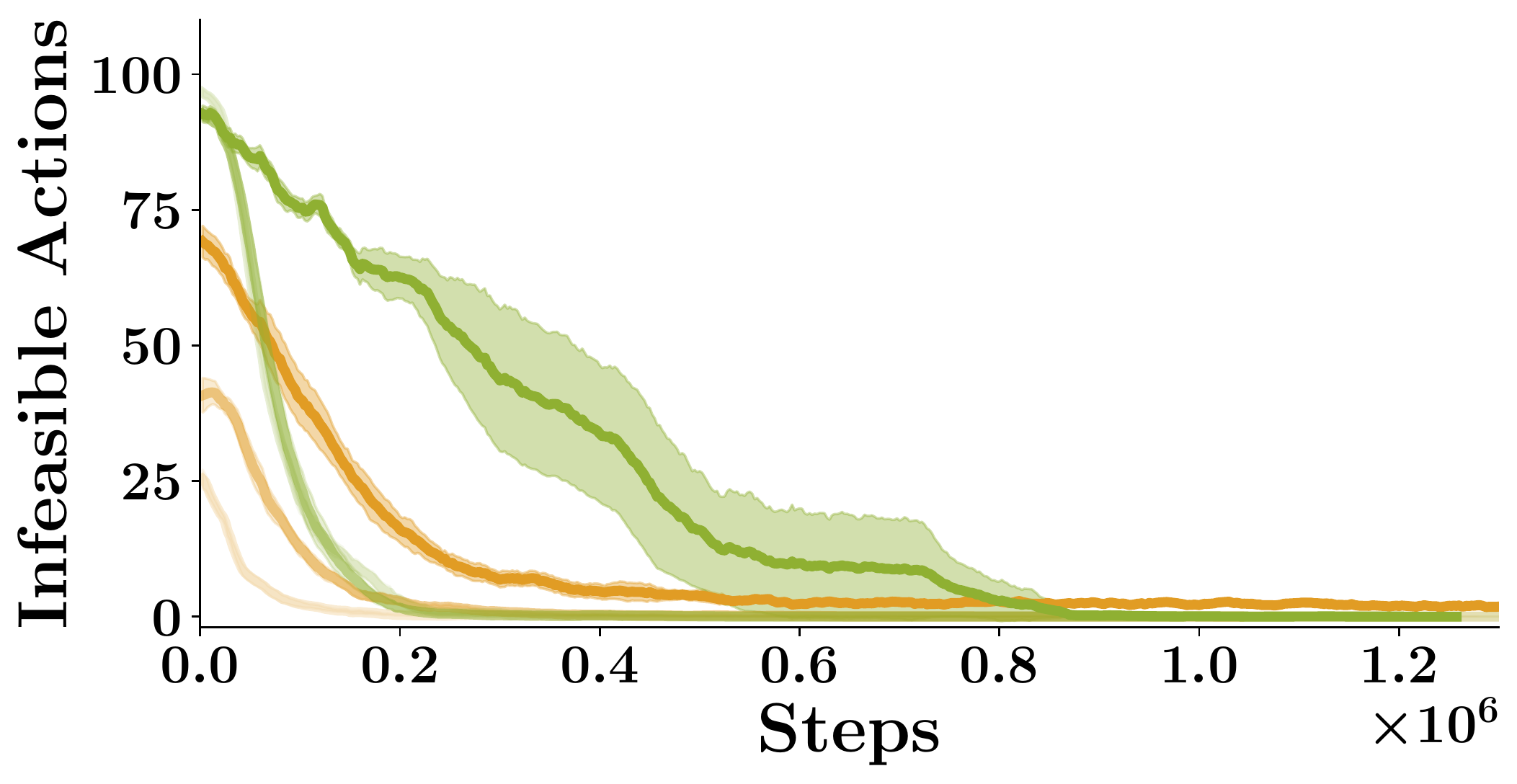}
\label{fig:medium_datasize_inv}
\includegraphics[width=0.24\textwidth]{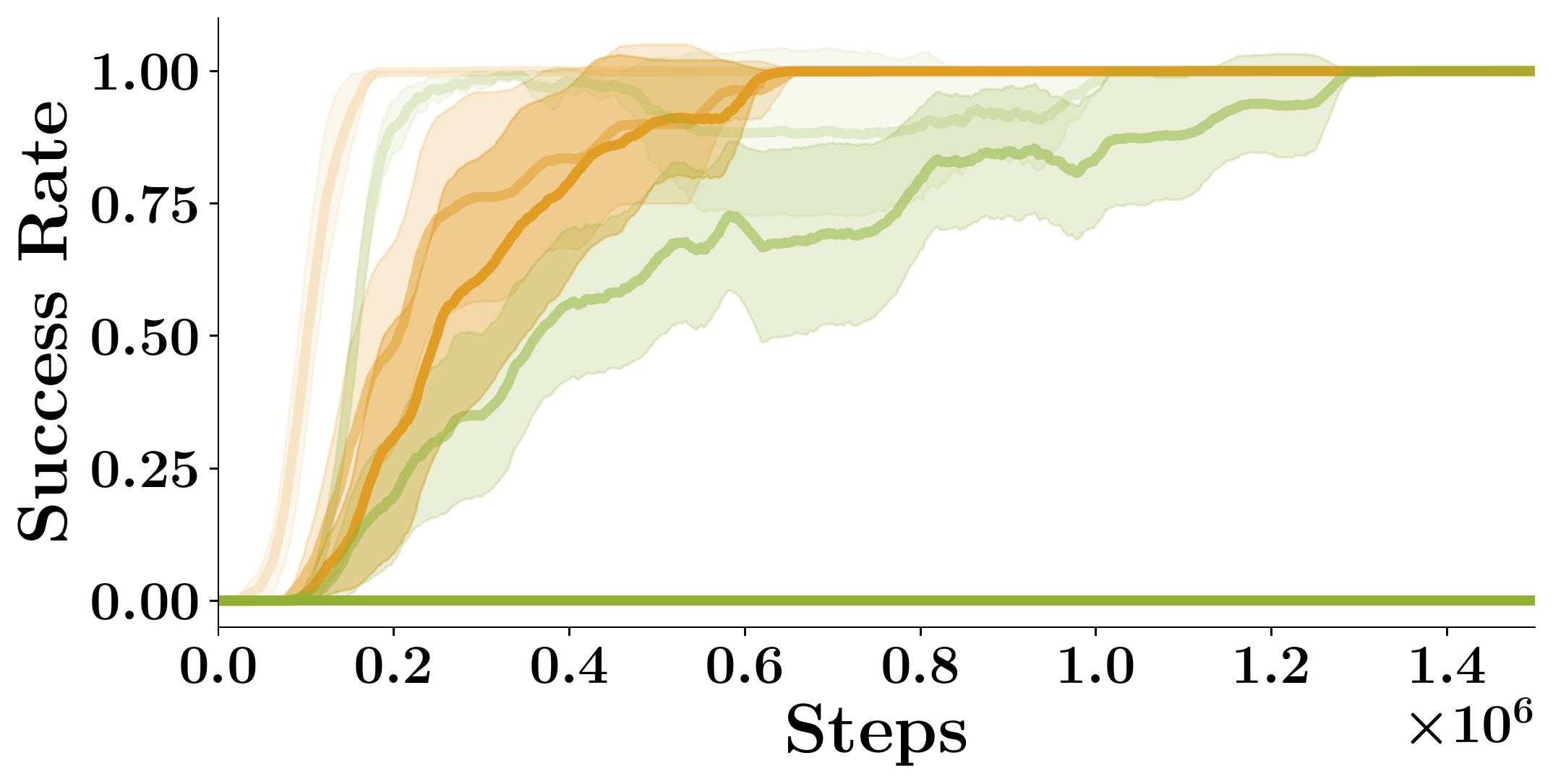}
\label{fig:hard_datasize_success}
\includegraphics[width=0.24\textwidth]{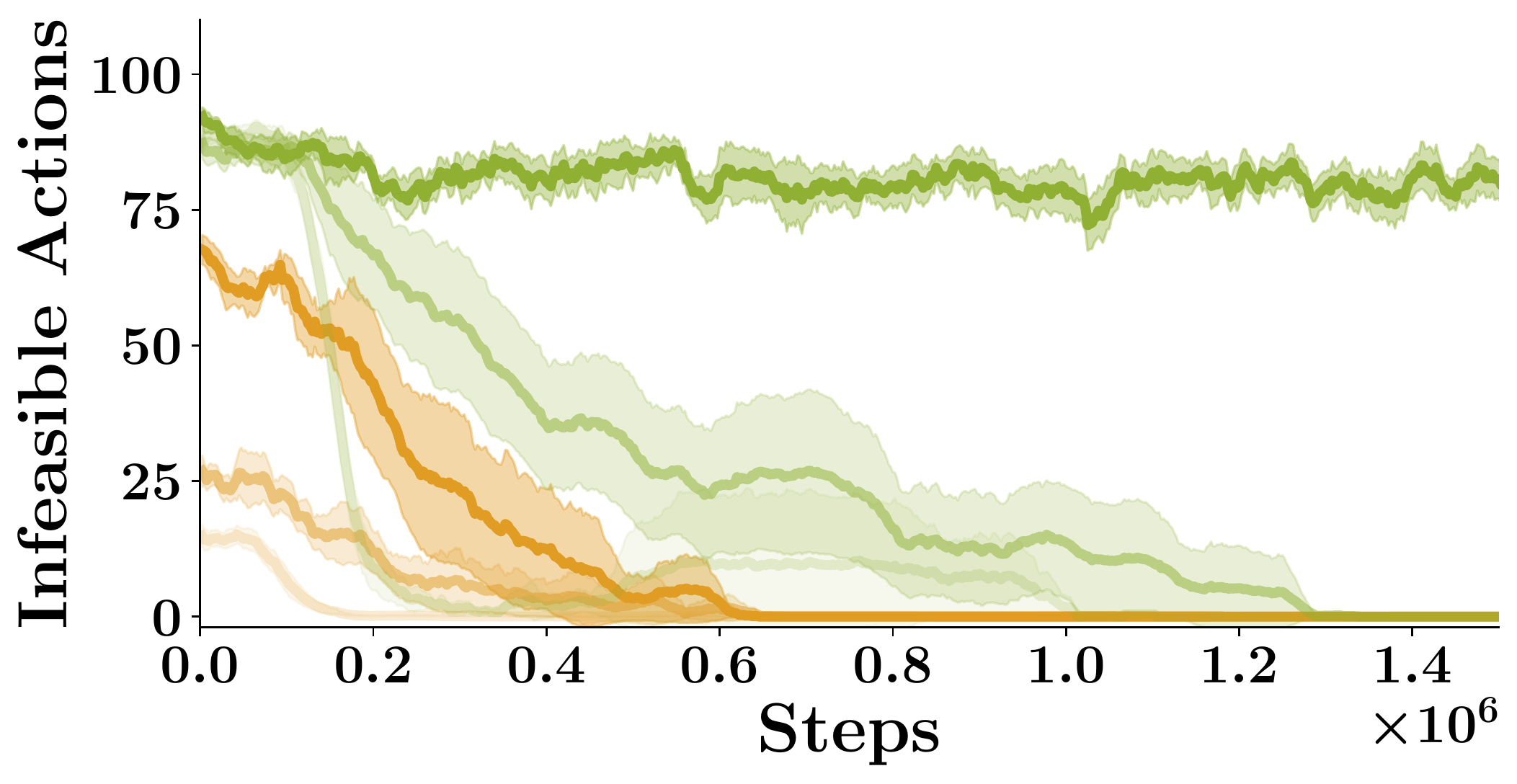}
\label{fig:hard_datasize_inv}
\endminipage
\vspace{-5mm}
\caption{Success rate and number of infeasible actions attempts for the Medium (top) and Hard
(bottom) task variants for 3 different dataset sizes: from darker to lighter $10^3, 10^4 \text{ and } 10^5$ datapoints respectively. When using $10^3$ datapoints DDQN+Prefill fails completely in solving the Hard task.(Results are averaged
across 10 independent random seeds, shaded area represents standard deviation).}
\vspace{1mm}
\label{fig:trainig-curves-datasize-effect}
\end{figure*}

\subsubsection{Soft prior integration}
\label{app:soft-elfp}
We implement  Soft-\alg, an ablation for our method which softens the integration of the prior into learning. This modification could potentially allow Soft-\alg to recover from degenerated priors, i.e., when play data has significant distribution mismatch with the target tasks of interest.
Soft-\alg samples from the prior (instead of sampling from the feasible set $\alpha$) both during initial exploration phase and while performing $\epsilon$-greedy exploration. Additionally, during exploitation, it multiplies the softmax of Q-values by the prior, thus biasing the greedy action selection towards the prior.
Finally, given the soft integration of the prior, it performs Q-learning over the set of all actions, instead of over the reduced set of feasible actions.
In Figure \ref{fig:soft_vs_hard_medium} we compare the performance of \alg with Soft-\alg. We observe that while Soft-\alg is able to learn both medium and hard tasks, it has lower sample efficiency than \alg. One of the main reasons of \alg being faster is because a hard prior integration alleviates the Q-network from learning values for all state-action pairs: it can focus on learning Q-values for feasible state-action pairs only and ignore Q-values for unfeasible actions. This shows one of the main contributions of our algorithm.
Nevertheless, a soft integration could be useful when dealing with degenerated priors and we reserve further exploration on the topic for future work.

\begin{figure}
\begin{center}%[!htb]
\vspace{2mm}

\minipage{\linewidth}
\small
\centering
\textcolor{elfp}{\rule[2pt]{20pt}{3pt}} \textrm{ELF-P} \quad
\textcolor{soft-elfp}{\rule[2pt]{20pt}{3pt}} \textrm{SOFT ELF-P}
\endminipage

\vspace{-1mm}
\end{center}
\minipage{\linewidth}
\includegraphics[width=0.49\textwidth]{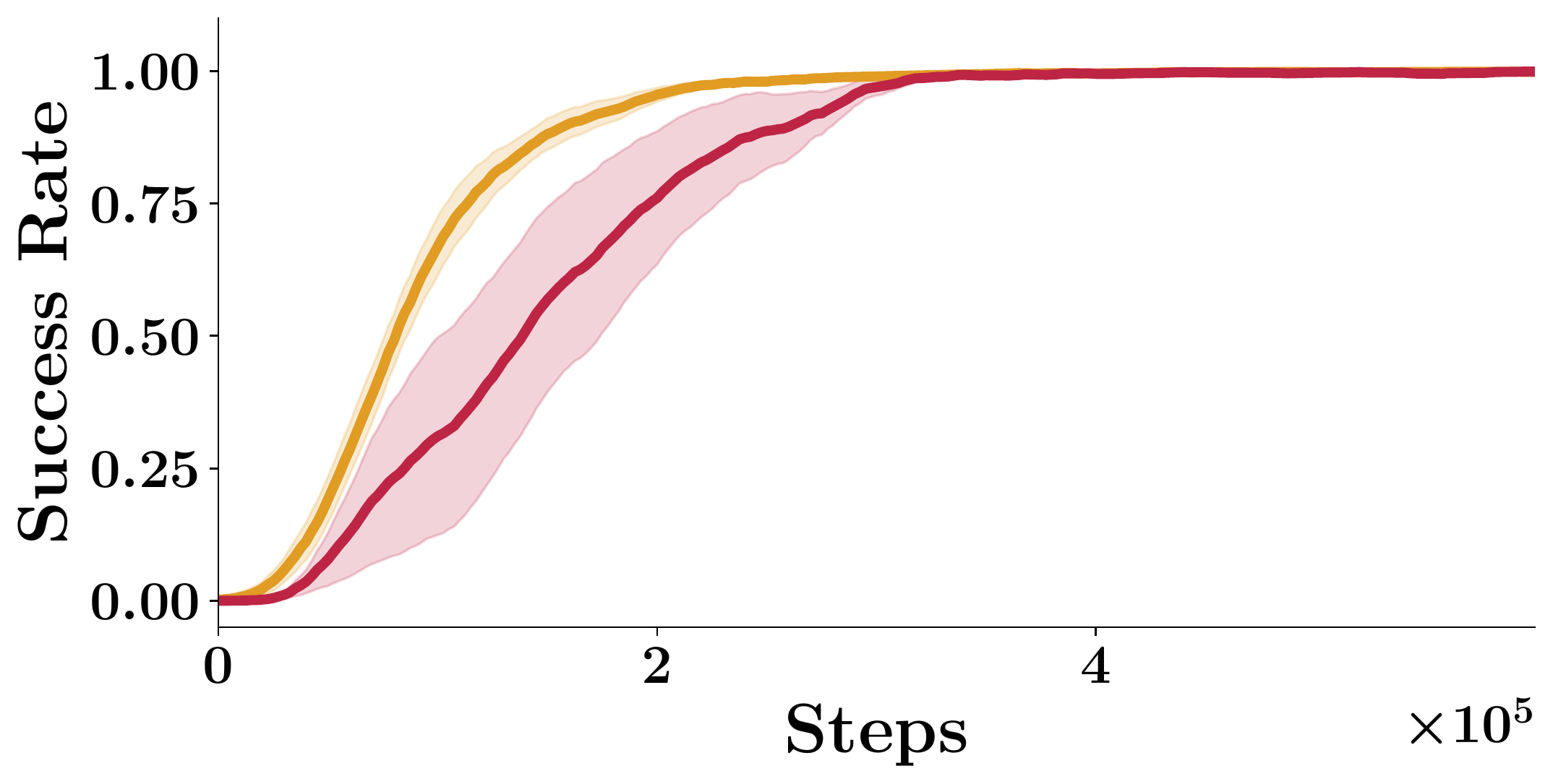}
\label{fig:medium_soft_success}
% \hspace{0.01\linewidth}
\includegraphics[width=0.49\textwidth]{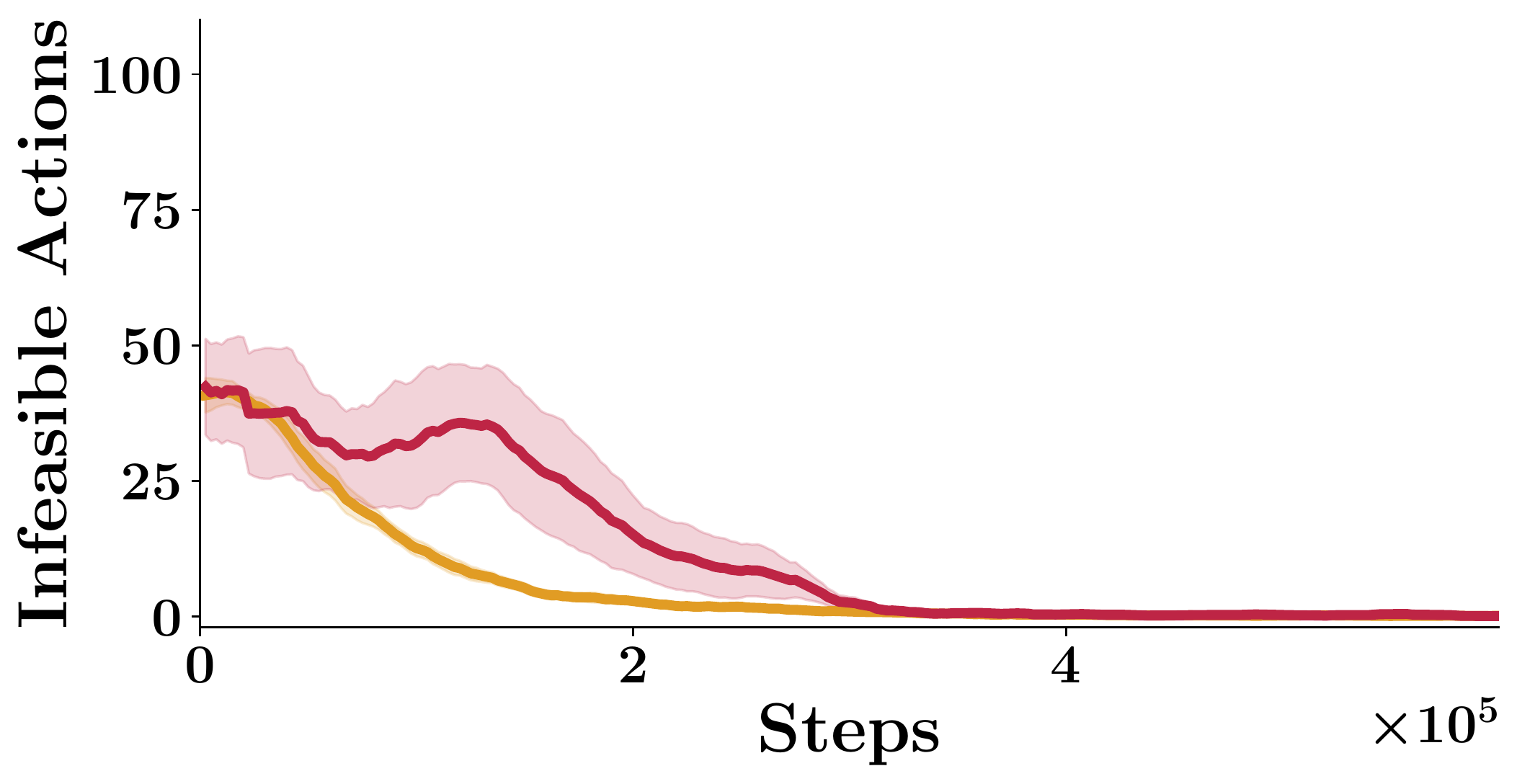}
\label{fig:medium_soft_inv}
\endminipage
\vspace{-5mm}
\caption{Comparison between \alg and Soft-\alg. \alg. Success rate and number of infeasible actions attempts for the Hard  task variant. (Play dataset size $10^4$). Results are averaged across 10 independent
random seeds, shaded area represents standard deviation) }
\label{fig:soft_vs_hard_medium}
\vspace{4mm}
\end{figure}

\subsection{Additional experimental details}
\label{app:additional_exp_details}

\subsubsection{Evaluation protocol} 
All reported results are obtained by evaluating the policies over 50 episodes every 2500 environment steps. Results are averaged over 10 random independent seeds unless explicitly stated. Shaded areas represent standard deviation across seeds.

\subsubsection{\textit{Play}-dataset collection}
Given that our method can learn with very little data ($\sim$1h30min of data collection), play data could in practice be collected by a few human operators choosing from a predefined set of primitives.
However for simplicity, we resort to evaluating whether a termination condition for each primitive is met in every configuration. This is a common approach under the options framework \cite{sutton1999between}. 
The tuples $(s,a)$, containing the state of the environment and the feasible action performed in the state respectively, are stored in the \textit{play}-dataset $\mathcal{D}$ for training the prior.

\subsubsection{Hyperparameters and Architectures}
\label{app:baselines}
Across all baselines and experiments, unless explicitly stated, we use the same set of hyperparameters and neural network architectures Values are reported in Table \ref{table:hyperparams-new}. We choose a set of values that was found to perform well in previous works. We report the list of hyperparameters that were tuned for each method in the following subsections. 

\begin{table}[ht]
\small
\centering
\begin{adjustbox}{width=0.5\textwidth}
\begin{tabular}{ccc}
    \toprule
    \textbf{Parameter}  & \textbf{Value}\\ 
    \midrule
    Q-network architecture  & MLP [128, 256]\\
    Batch size       & 256\\
    Exploration technique  & $\epsilon$-greedy with exponential decay\\
    Initial $\epsilon$ & 0.5 \\
    Decay rate for $\epsilon$  & 5e-5 \\
    Discount $\gamma$  & 0.97 \\
    Optimizer & Adam ($\beta_1=0.9, \beta_2=0.999$) \cite{kingma2014}\\
    Learning rate $\eta$ & 1e-4 \\
    Episode length $T$ & 100 \\
    Experience replay size & 1e6 \\
    Initial exploration steps & 2000 \\
    Steps before training starts & 1000 \\
    Steps between parameter updates & 50 \\
    Soft target update parameter $\mu$ & 0.995 \\
    Threshold $\rho$ & 0.01 \\
   \bottomrule
\end{tabular}
\end{adjustbox}
\caption{Architecture parameters and hyperparameters used for all the baselines and \alg.}
\label{table:hyperparams-new}
\end{table}

\begin{table}[ht]
\centering
\small
\label{table:hyperparams-prior}
\begin{tabular}{ccc}
    \toprule
    \textbf{Parameter}  & \textbf{Value} \\ 
    \midrule
    Prior architecture & MLP [200, 200] \\
    Prior batch size & 500 \\    
    Prior training steps & 1e5 \\
    Dataset size & 100000 \\
    Prior optimizer & Adam($\beta_1=0.9, \beta_2=0.999$ \cite{kingma2014} \\
    Learning rate & 1e-3 \\
   \bottomrule
\end{tabular}
\caption{Architecture parameters and hyperparameters used for training the prior in \alg.}
\end{table}

\paragraph{DDQN}
      \begin{itemize}
        \item Discount $\gamma$: $0.97$. Tuned over $[0.90, 0.95, 0.97, 0.99]$. 
      \end{itemize}

\paragraph{DDQN+HER}
      \begin{itemize}
        \item Relabeling ratio $k$: $4$. As suggested in the original paper \cite{andrychowicz2017hindsight}.
        \item Discount $\gamma$: $0.97$. Tuned over $[0.90, 0.95, 0.97, 0.99]$. 
      \end{itemize}

\paragraph{DDQN+Prefill}
\begin{itemize}
    \item Discount $\gamma$: $0.95$. Tuned over $[0.90, 0.95, 0.97, 0.99]$. 
     \item Prefill dataset: 
 The dataset used to prefill the replay buffer is the same as the \textit{Play}-dataset used to train the behavioral prior, but extending the tuples $(s,a) \to (s,a,s',g,r)$ to be able to perform TD-loss on them. 
Given that the \textit{Play}-dataset is task-agnostic, we decide to compute the rewards with relabelling, i.e., to relabel the goal $g$ to the achieved goal and to set the reward $r$ to $1$, otherwise set the reward to zero. We experiment with different relabeling frequencies $f \in [0.0, 0.25, 0.5, 0.75, 1.0]$.
We find that $f>0$ leads to overestimation of the Q-values at early stages of training and thus hurt performance. For this reason we use $f=0$, i.e., no relabelling.
 \end{itemize} 
 
\paragraph{DQfD}
\begin{itemize}
    \item Discount $\gamma$: $0.95$. Tuned over $[0.90, 0.95, 0.97, 0.99]$.
    \item Large margin classification loss weight $\lambda_2$: $1e^{-3}$. Tuned over $[1e^{-2}, 1e^{-3}, 1e^{-4}, 1e^{-5}]$.
    \item Expert margin: $0.05$. Tuned over $[0.01, 0.05, 0.1, 0.5, 0.8]$.
    \item L2 regularization loss weight $\lambda_3$: $1e^{-5}$.
    \item Prefill dataset: Same approach as for DDQN+Prefill explained above.
\end{itemize} 

\paragraph{SOFT ELF-P}
\begin{itemize}
    \item Discount $\gamma$: $0.97$. Tuned over $[0.90, 0.95, 0.97, 0.99]$.
\end{itemize} 

\paragraph{SPiRL}
\begin{itemize}
    \item Discount $\gamma$: $0.97$. Tuned over $[0.90, 0.95, 0.97, 0.99]$.
    \item KL weight $\alpha$: 0.01. Tuned over $[0.005, 0.01, 0.05]$.
    \item Actor-network architecture: MLP [128, 256]
\end{itemize}

\paragraph{\alg}
      \begin{itemize}
        \item Discount $\gamma$: $0.97$. Tuned over $[0.90, 0.95, 0.97, 0.99]$. 
        \item Prior training: we use the parameters reported in Table \ref{table:hyperparams-prior}.
      \end{itemize}

\subsection{Hardware experiments}
\label{app:hardware_exp}
The experiments are carried out on a real desktop setting depicted in Figure \ref{fig:method} (upper right).
We use \textit{Boston Dynamics} Spot robot \cite{noauthor_spot_nodate} for all our experiments.
The high-level planner, trained in simulation, is used at inference time to predict the required sequence of motion primitives to achieve a desired goal. The motion primitives are executed using established motion planning methods \cite{zimmermann2022dca} to create optimal control trajectories. 
We then send desired commands to Spot using \textit{Boston Dynamics Spot SDK}, a Python API provided by Boston Dynamics, which we interface with using \textit{pybind11}.
We refer the reader to the Video material for a visualization of the real-word experiments.

\subsection{Optimality}
\label{app:optimality}
Let us consider an MDP $\mathcal{M}$, the reduced MDP $\mathcal{M'}$ as defined in Subsection \ref{ssec:learn_maskedQ1}, the selection operator  ${\alpha: \mathcal{S \to P(A)}}$ and a set of feasible policies under $\alpha$: $\Pi_\alpha=\{ \pi | \pi(s,g) \in \alpha(s) \forall s \in \mathcal{S}, g \in \mathcal{G} \}$.

\begin{theorem}
Given an MDP $\mathcal{M}$, the reduced MDP $\mathcal{M'}$ and a selection operator $\alpha$ such that the optimal policy for $\mathcal{M}$ belongs to $\Pi_\alpha$ (that is $\pi_{\mathcal{M}}^*(s, g) \in \Pi_\alpha$),
the optimal policy in $\mathcal{M'}$ is also optimal for $\mathcal{M}$, i.e. $\pi_{\mathcal{M'}}^*(s, g) = \pi_{\mathcal{M}}^*(s, g)$.
\end{theorem}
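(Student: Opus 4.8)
The plan is to establish the result at the level of action-value functions: I will show that the optimal action-value function $Q_{\mathcal{M}}^*$ of the full MDP is itself a fixed point of the Bellman optimality operator associated with the reduced MDP $\mathcal{M'}$. Since that operator is a $\gamma$-contraction for $\gamma < 1$, its fixed point is unique, forcing $Q_{\mathcal{M'}}^* = Q_{\mathcal{M}}^*$; the agreement of the induced greedy policies then follows immediately. An equivalent and more elementary framing compares returns directly: because $\Pi_\alpha \subseteq \Pi$ (where $\Pi$ is the set of all stationary policies), we have $J(\pi_{\mathcal{M'}}^*) \le J(\pi_{\mathcal{M}}^*)$, while the hypothesis $\pi_{\mathcal{M}}^* \in \Pi_\alpha$ makes $\pi_{\mathcal{M}}^*$ an admissible competitor in $\mathcal{M'}$, giving $J(\pi_{\mathcal{M}}^*) \le J(\pi_{\mathcal{M'}}^*)$; the two bounds pin the optimal values together. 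I prefer the fixed-point route because it yields equality of the policies and not merely of their values.

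First I would record the two Bellman optimality equations. For the full MDP,
\[
Q_{\mathcal{M}}^*(s,a,g) = R(s,g) + \gamma\, \mathbb{E}_{s' \sim P(\cdot|s,a)}\!\left[\max_{a' \in \mathcal{A}} Q_{\mathcal{M}}^*(s',a',g)\right],
\]
whereas the reduced operator replaces the inner maximization over $\mathcal{A}$ by one over the feasible set $\alpha(s')$. The crucial step is to exploit the assumption $\pi_{\mathcal{M}}^* \in \Pi_\alpha$. Since $\pi_{\mathcal{M}}^*(s,g) = \argmax_{a \in \mathcal{A}} Q_{\mathcal{M}}^*(s,a,g)$ is a maximizer of $Q_{\mathcal{M}}^*$ that by hypothesis lies in $\alpha(s)$, the unconstrained maximum is already attained inside the feasible set, so
\[
\max_{a \in \mathcal{A}} Q_{\mathcal{M}}^*(s,a,g) = \max_{a \in \alpha(s)} Q_{\mathcal{M}}^*(s,a,g)
\]
for every $s \in \mathcal{S}$ and $g \in \mathcal{G}$. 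Substituting this identity into the Bellman equation above shows that $Q_{\mathcal{M}}^*$ satisfies the reduced Bellman optimality equation verbatim; by the Banach fixed-point theorem applied to the contractive reduced operator, its unique fixed point $Q_{\mathcal{M'}}^*$ must coincide with $Q_{\mathcal{M}}^*$. Extracting policies then gives $\pi_{\mathcal{M'}}^*(s,g) = \argmax_{a \in \alpha(s)} Q_{\mathcal{M}}^*(s,a,g)$, and because the unconstrained maximizer already lies in $\alpha(s)$ this equals $\argmax_{a \in \mathcal{A}} Q_{\mathcal{M}}^*(s,a,g) = \pi_{\mathcal{M}}^*(s,g)$, as claimed.

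The step I expect to be the main obstacle is the passage from equality of value functions to the asserted \emph{pointwise} equality of policies. Optimal policies need not be unique when the $\argmax$ admits ties, so the clean statement $\pi_{\mathcal{M'}}^*(s,g) = \pi_{\mathcal{M}}^*(s,g)$ implicitly assumes a canonical selection (or a uniqueness hypothesis on $\pi_{\mathcal{M}}^*$). I would handle this either by assuming the optimal policy of $\mathcal{M}$ is unique, or by restating the conclusion as ``every greedy policy of $\mathcal{M'}$ is optimal for $\mathcal{M}$'' and verifying that no feasible greedy choice in $\mathcal{M'}$ can be strictly suboptimal in $\mathcal{M}$, precisely because the feasible and unconstrained maxima of $Q_{\mathcal{M}}^*$ coincide. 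A secondary point worth stating explicitly is that the reduced operator is well defined and contractive only under the standing assumption $\alpha(s) \neq \emptyset$ for all $s$, which is built into the definition of $\mathcal{M'}$.
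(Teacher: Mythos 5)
Your proof is correct, but it takes a genuinely different route from the paper's. The paper argues directly at the level of policies and state-value functions: since $P$ and $R$ are shared between $\mathcal{M}$ and $\mathcal{M'}$, and a feasible policy selects the same action in both MDPs, the trajectory distributions and hence the value functions coincide for every $\pi \in \Pi_\alpha$; the optimal policy of $\mathcal{M'}$ is then $\argmax_{\pi \in \Pi_\alpha} V^{\pi}_{\mathcal{M'}} = \argmax_{\pi \in \Pi_\alpha} V^{\pi}_{\mathcal{M}}$, and the hypothesis $\pi^*_{\mathcal{M}} \in \Pi_\alpha$ lets the authors widen the argmax from $\Pi_\alpha$ to all of $\Pi$ without changing it. This is essentially the ``elementary framing'' you sketch and then set aside. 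Your fixed-point argument --- observing that $\max_{a \in \mathcal{A}} Q^*_{\mathcal{M}}(s,a,g) = \max_{a \in \alpha(s)} Q^*_{\mathcal{M}}(s,a,g)$ because the unconstrained maximizer is feasible, so $Q^*_{\mathcal{M}}$ satisfies the reduced Bellman optimality equation and must equal $Q^*_{\mathcal{M'}}$ by uniqueness of the contraction's fixed point --- is sound and buys something the paper's proof does not state explicitly: equality of the optimal \emph{action-value} functions, which is precisely the object the paper's modified Q-learning iteration converges to, so your version ties the optimality claim more tightly to the algorithm actually being run. The price is the extra machinery ($\gamma$-contraction, Banach fixed point) and a small bookkeeping obligation you gesture at but should make explicit: $Q_{\mathcal{M'}}$ is naturally defined only on feasible pairs, so you must either restrict the operator to $\{(s,a) : a \in \alpha(s)\}$ or note that the extension to all of $\mathcal{S}\times\mathcal{A}\times\mathcal{G}$ remains a contraction. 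Your remark about ties in the $\argmax$ is well taken; the paper's statement has the same looseness and is best read as ``every optimal policy of $\mathcal{M'}$ is optimal for $\mathcal{M}$,'' which both arguments deliver.
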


\begin{proof}
We can define a goal-conditioned value function $V^\pi(s,g)$, defined as the expected sum of discounted future rewards if the agent starts in $s$ and follows policy $\pi$ thereafter: ${V^\pi(s, g) = \mathbb{E}_{\mu^\pi}\left[\sum_{t-1}^\infty \gamma^{t-1} R(s, g) \right]}$ under the trajectory distribution ${\mu^\pi(\tau|g) = \rho_0(s_0) \prod_{t=0}^\infty P(s_{t+1}|s_t, a_t)}$ with $a_t = \pi(s_t, g) \; \forall t$.
By construction, the transition kernel $P$ and the reward function $R$ are shared across $\mathcal{M}$ and $\mathcal{M'}$, and a feasible policy $\pi$ always selects the same action in both MDPs, thus trajectory distributions and value functions are also identical for feasible policies. More formally, if $\pi \in \Pi_\alpha$, then $\mu^\pi_{\mathcal{M}}=\mu^\pi_{\mathcal{M'}}$ and $V^\pi_{\mathcal{M}}(s, g) = V^\pi_{\mathcal{M'}}(s, g)$.

It is then sufficient to note that 

\begin{equation}
\begin{aligned}
     \pi_{\mathcal{M'}}^*(s, g)&\stackrel{def}{=}\argmax_{\pi \in \Pi_\alpha} V^{\pi}_{\mathcal{M'}}(s, g)=\argmax_{\pi \in \Pi_\alpha} V^{\pi}_{\mathcal{M}}(s,g)=\\
     &=\argmax_{\pi \in \Pi} V^{\pi}_{\mathcal{M}}(s, g)\stackrel{def}{=}\pi_{\mathcal{M}}^*(s, g),
\end{aligned}
\end{equation}

where the second equality is due to the previous statement, and the third equality is granted by the assumption that the optimal policy for $\mathcal{M}$ is feasible in $\mathcal{M'}$ (i.e. $\pi_\mathcal{M}^* \in \Pi_\alpha$).
\end{proof}

Hence, our proposed algorithm, which allows us learning directly on the reduced MDP $\mathcal{M'}$, can under mild assumptions retrieve the optimal policy in the original $\mathcal{M}$.

We finally remark that model-free PAC-MDP algorithms have shown to produce upper bounds on sample complexity that are $\tilde O(N)$, where $N \leq |\mathcal{S}||\mathcal{A}|$ \cite{lattimore2012pac}, i.e.,  directly dependent on the number of state-action pairs. 
Hence, learning in the reduced MDP $\mathcal{M'}$ instead of $\mathcal{M}$ could lead to near-linear improvements in sample efficiency as the number of infeasible actions grows. We demonstrate it empirically in the subsection \textit{Sample Complexity} in \ref{sec:Experiments}.

\end{document}